\documentclass[journal]{IEEEtran}
\usepackage{siunitx}
\DeclareSIUnit\mt{\milli\tesla} 
\sisetup{inter-unit-product = {-}}
\usepackage[utf8]{inputenc}
\usepackage[english]{babel}
\usepackage{xurl} 
\usepackage{hyperref}
\usepackage{amsmath,amsfonts}
\usepackage[linesnumbered,ruled,vlined]{algorithm2e}
\usepackage{array}
\usepackage{array, makecell} %
\usepackage[noadjust]{cite}

\usepackage{textcomp}
\usepackage{url}
\usepackage{verbatim}
\usepackage{graphicx}
\hyphenation{op-tical net-works semi-conduc-tor IEEE-Xplore}
\def\BibTeX{{\rm B\kern-.05em{\sc i\kern-.025em b}\kern-.08em
    T\kern-.1667em\lower.7ex\hbox{E}\kern-.125emX}}
\usepackage{balance}
\usepackage{xcolor}
\usepackage{multirow} 

\usepackage{setspace}
\usepackage{graphicx}
\usepackage{url}
\usepackage{subcaption}
\usepackage{caption}
\usepackage{amsmath}
\usepackage{algcompatible}
\usepackage{array}
\usepackage{amsthm}
\usepackage{adjustbox}
\usepackage{txfonts}
\usepackage{stfloats}

\newtheorem{lemma}{\textbf{Lemma}}

\usepackage{lipsum}  
\usepackage{color}


\usepackage{pgfplots}
\usepackage{tikz}
\usetikzlibrary{fadings}
\usetikzlibrary{patterns}
\usetikzlibrary{shadows.blur}
\usetikzlibrary{shapes} 
\usepackage[linesnumbered]{algorithm2e}%
\let\oldnl\nl
\newcommand{\nonl}{\renewcommand{\nl}{\let\nl\oldnl}}
\hyphenation{op-tical net-works semi-conduc-tor}
\pgfplotsset{compat=1.18} 
\begin{document}

\title{ Event-Triggered Reinforcement Learning Based Joint Resource Allocation  for Ultra-Reliable  Low-Latency V2X Communications}

\author{Nasir~Khan,~\IEEEmembership{Member,~IEEE} and~Sinem Coleri,~\IEEEmembership{Fellow,~IEEE}
	\thanks{Nasir~Khan and Sinem Coleri are with the department of Electrical and Electronics Engineering, Koc University, Istanbul, Turkey, email: $\lbrace$nkhan20, scoleri$\rbrace$@ku.edu.tr. This work is supported by Scientific and Technological Research Council of Turkey Grant $\#$119C058 and Ford Otosan.}}
\maketitle
\begin{abstract}
Future  6G-enabled vehicular networks face the challenge of ensuring ultra-reliable low-latency communication (URLLC) for delivering safety-critical information in a timely manner. Existing resource allocation schemes for vehicle-to-everything (V2X) communication systems primarily rely on traditional optimization-based algorithms. However, these methods often fail to guarantee the strict reliability and latency requirements of URLLC applications in dynamic vehicular environments due to the high complexity and communication overhead of the solution methodologies. This paper proposes a novel deep reinforcement learning (DRL) based framework for the joint power and block length allocation to minimize the worst-case decoding-error probability in the finite block length (FBL) regime for a URLLC-based downlink V2X communication system. The problem is formulated as a non-convex mixed-integer nonlinear programming problem (MINLP). Initially, an algorithm grounded in optimization theory is developed based on deriving the joint convexity of the decoding error probability in the block length and transmit power variables within the region of interest. Subsequently, an efficient event-triggered DRL based algorithm is proposed to solve the joint optimization problem. Incorporating event-triggered learning into the DRL framework enables assessing whether to initiate the DRL process, thereby reducing the number of DRL process executions while maintaining reasonable reliability performance. The DRL framework consists of a two-layered structure. In the first layer,  multiple deep Q-networks (DQNs) are established at the central trainer for block length optimization. The second layer involves an actor-critic network and utilizes the deep deterministic policy-gradient (DDPG)-based algorithm to optimize the power allocation. Simulation results demonstrate that the proposed event-triggered DRL scheme can achieve 95$\%$ of the performance of the joint optimization scheme while reducing the DRL executions by up to 24$\%$ for different network settings.
\end{abstract}
\begin{IEEEkeywords} 
Vehicular networks, deep reinforcement learning (DRL), 6G networks, vehicle-to-everything (V2X) communication, event-triggered learning, finite block length transmission, ultra-reliable and low-latency communications (URLLC).
\end{IEEEkeywords}
\IEEEpeerreviewmaketitle

\section{Introduction} \label{sec:intro}
\noindent \IEEEPARstart{V}{ehicle}-To-Everything (V2X) networks have recently attracted a lot of attention as a crucial enabler for intelligent transportation systems (ITS) by enhancing traffic safety, improving
vehicle coordination, and enabling novel transportation
features such as platooning, advanced real-time navigation,
and lane change alerts. Compared to  dedicated short-range
communications (DSRC) and the long-term evolution (LTE) technology, the 5G New Radio (5G-NR) V2X  offers several advantages, including controllable latency, larger coverage area, robust scalability, and high data rates, even in high-mobility scenarios \cite{5G-V2X}.  While existing and predecessor technologies for V2X applications are mainly designed to cater to data-intensive applications for heterogeneous traffic and infotainment services (video streaming, location tracking, map updates),  the ultra-reliable and low-latency communications (URLLC) aspect is not adequately studied to support safety-critical services such as cooperative adaptive cruise control and intersection collision warnings. Advanced  V2X  use cases, e.g., vehicle platoon, advanced driving,
extended sensors, real-time navigation, and remote driving often demand a transmission latency within a few milliseconds and reliability on the order of 99.999$\%$ \cite{r0}, \cite{r2}. Although new statistical methods for wireless channel modeling  \cite{nilo_new}, and novel techniques for ultra-reliable communications \cite{nilofer1} have been proposed for URLLC system design, addressing the resource allocation in highly dynamic vehicular networks in combination with the strict URLLC  requirements is still a key problem that needs to be addressed.

To meet the stringent quality-of-service (QoS) requirements in the URLLC network design, the control messages are required to be transmitted with short packet size (e.g., 50 $\sim$ 300 bytes) \cite{r9}. Most existing works
on V2X communications are based on Shannon’s capacity
theorem  \cite{2timescale, VURLLC2, VURLLC4} implicitly assuming infinite block length availability. However, 
for the practical realization of URLLC-based V2X applications, the information size is limited, which necessitates incorporating the finite block length (FBL) information theory to reduce the transmission delay. Additionally, for small packet sizes, the decoding error probability is no longer negligible, and the trade-off between the transmission rate, decoding error probability, and transmission latency (in terms of block length/channel uses) cannot be captured by Shannon’s capacity formula. As an important enabler of URLLC,  recent works have adopted the FBL structure for resource allocation in V2X networks with the objective of network-wide power minimization \cite{URLLC4}, the transmission information maximization \cite{r8}, and the worst-case transmission latency minimization \cite{r9}, \cite{worstlatency}. However, these works mainly focus on achieving ultra-low latency, whereas the reliability aspect in terms of minimizing the decoding-error probability is not considered. Additionally, these works  \cite{2timescale, VURLLC2, VURLLC4, r8, r9, URLLC4, worstlatency} overlook optimizing the block length, which significantly impacts the delay performance of latency-sensitive communications.  

The minimization of decoding-error probability has been investigated previously in wireless resource allocation schemes to characterize the reliability performance in the design of energy harvesting networks \cite{r12}, unmanned aerial vehicle (UAV) assisted communication networks \cite{r13}, and industrial automation systems \cite{r14}, \cite{rr15}. In \cite{r12}, the simultaneous wireless information and power transfer (SWIPT) system parameters and block length are concurrently optimized to increase the system reliability. The location and block length of a UAV are jointly optimized while guaranteeing bounds on the target decoding-error probability in \cite{r13}. However, these works do not consider optimizing the power allocation, which has a significant impact on the reliability of the system. In \cite{r14}, the problem of improving the target device reliability while dealing with the transmit power and block length limitations is addressed in an industrial automation scenario. It is demonstrated that the decoding-error probability set for one device determines the achievable error probability performance for the other device, leading to unfair reliability across various devices. \cite{rr15} focuses on achieving fair reliability among the devices by minimizing the maximum decoding-error probability for power allocation and transmission block length optimization in an industrial automation scenario. The aforementioned works \cite{r12,r13, r14, rr15} do not investigate the joint convexity of decoding-error probability with respect to the block length and transmit power, which can further improve the performance of systems requiring bounded latency and extreme reliability \cite{ J_Convexity}. Additionally,  these works adopt iterative optimization theory-based algorithms inherently experiencing high runtime complexity, making them impractical for mission-critical V2X applications.

Considering that the time frame-based decision-making can be regarded as a Markov decision process (MDP), many studies have applied reinforcement learning methods for resource allocation in V2X networks.  Reinforcement learning (RL) effectively addresses decision-making under uncertainty and solves hard-to-optimize objective functions efficiently \cite{DRLforwireless}. Deep RL (DRL) combines RL with deep neural networks (DNN) to deal with the large state-action space, making it particularly appealing for real-time resource management. DRL  has been widely adopted to address resource allocation problems in  URLLC-aware V2X networks scenarios \cite{DRL23_3, DRL23_5, DRL23_6, DDPG0, DRL23_1}. \cite{DRL23_3} proposes a federated multi-agent DRL scheme for channel selection and power allocation in a decentralized fashion by incorporating the reliability constraint in terms of the signal-to-interference-plus-noise ratio (SINR) outage probability on the vehicle-to-vehicle (V2V) communication links.
 \cite{DRL23_5} considers a two-timescale federated DRL algorithm combined with spectral clustering for transmission mode selection in a V2X communication scenario, where the reliability and latency constraints in terms of tolerable
outage probability are incorporated into the constrained optimization problem. A multi-agent DRL scheme to allocate resources for both unicast and broadcast V2X applications in the absence of the global channel state information (CSI) is addressed in \cite{DRL23_6}, where the URLLC constraints are incorporated into the reward function directly. \cite{DDPG0}  proposes a hybrid centralized-distributed DRL-based resource allocation scheme by considering both the total system capacity and reliability of the vehicular links in the design objective. \cite{DRL23_1} formulates a multi-objective resource allocation problem to maximize the transmission success ratio and the mean opinion score of vehicular links for intra-platoon communications using a novel DRL-based framework, where the latency requirement is incorporated into the observation space of the DRL agent as the percentage of the remaining time
over the transmission latency. 
For the mentioned literature on URLLC-enabled V2X systems \cite{DRL23_3, DRL23_5, DRL23_6, DDPG0, DRL23_1}, only V2V communication is utilized for disseminating crucial safety messages among vehicles, whereas the vehicle-to-infrastructure (V2I) links are dedicated to data-intensive services and non-safety infotainment applications using
infinite block length transmissions. Nevertheless, the reliability of the V2V links diminishes when considering blockage effects, which hampers the performance of V2V communications. Additionally, advanced URLLC-based V2X use cases (e.g.,  teleoperated driving and vehicle platooning) often rely on V2I links for remote control \cite{5G-NR}, \cite{inband}.  Moreover, the resource allocation strategies \cite{DDPG0, DRL23_1,  DRL23_3, DRL23_5, DRL23_6}  may not be suitable for realistic URLLC-6G resource management problems in actual wireless systems because the learning process is conducted  in a time-triggered mechanism, i.e., the resource allocation decisions are updated periodically in each time frame, requiring computationally intensive training procedures.

 Against the time-triggered mechanism, event-triggered learning has been proposed with the goal of activating computations or communications only when the system state deviates from the expected accuracy or when the triggering condition is violated \cite{ET}. Recently, event-triggered learning and  DRL have been combined for path planning in an autonomous driving scenario \cite{ETO}. The proposed methodology involves incorporating the triggering condition as communication loss into the reward function, learns the driving control policy for autonomous path planning, and is able to improve the system performance compared to a standard DRL-based learning approach. None of these studies however incorporate event-triggered learning into a DRL approach for minimizing the computational expenses and optimizing the continuous power and integer block length allocation in a URLLC-aware V2X network. 
 
\subsection{Contributions and organization}
This paper proposes novel adaptive solution strategies to jointly optimize the transmit power and block length allocation with the objective of minimizing the worst-case decoding-error probability among all the vehicles subject to the stringent reliability and latency requirement for a time-critical V2X URLLC scenario. A preliminary version of this work proposing a sub-optimal algorithm based on the block coordinate descent (BCD) method for the block length and transmit power optimization in a vehicular communication scenario is described in \cite{myconf}.  Different from our previous work \cite{myconf}, we first analyze the joint-convexity of the optimization problem with respect to the block length and transmit power variables, derive the Karush–Kuhn–Tucker (KKT) optimality conditions to solve the joint optimization problem, and analyze its applicability in a vehicular network. Additionally, we propose a  DRL-based robust algorithm for the block length assignment and power allocation problem, which can effectively handle the excessive computational complexity associated with the joint optimization scheme. 
Further, we incorporate event-triggered learning into the proposed DRL-based framework enabling the assessment of whether to initiate the DRL process. This, in turn, leads to a decrease in the frequency of DRL process executions, all the while upholding acceptable levels of reliability performance.

The original contributions of this paper are listed as follows:

\begin{itemize}
\item We propose a novel optimization theory based solution strategy for the joint optimization of code block length and transmit power with the goal of minimizing the maximum decoding error probability of all vehicles while considering the transmit power, reliability, and latency constraints in a URLLC-enabled vehicular network, for the first time in the literature. We show that the decoding error probability function preserves joint convexity in block length and transmit power within reasonable limits on the block length. We then relax the integer block length constraint, derive optimality conditions by using the KKT analysis, solve the problem by using Lagrangian dual decomposition method and utilize a low-complexity greedy search methodology to convert the positive continuous block length values to an integer solution. 

\item We propose a centralized event-triggered DRL-based scheme to determine the optimal block length and transmit power allocation policies to achieve fair reliability while satisfying the URLLC constraints, for the first time in the literature. For the DRL scheme, a two-layered training framework is designed. In the first layer, a multi-DQN structure is used for code block length assignment separately. The second layer involves an actor-critic network and utilizes the deep deterministic policy-gradient (DDPG)-based algorithm to optimize the transmit power allocation. 
To accelerate learning efficiency and relieve the trainer of computation stress, we leverage event-triggered learning by introducing a trigger block based on the input state similarity.

\item Via extensive simulations, we demonstrate that our proposed event-triggered DRL-based approach outperforms the proposed joint optimization method and the recent state-of-the-art DRL-based approach utilizing a single-DQN to optimize the joint resource allocation 
in terms of computational complexity and convergence time for different network configurations.
\end{itemize}

The rest of the paper is organized as follows. Section \ref{sec:system} describes the vehicular URLLC system model and assumptions used in the paper. Section \ref{sec:opt_prob} presents the mathematical formulation of the worst-case decoding-error probability minimization problem for the vehicular URLLC system. Section \ref{sec:joint} describes the joint-convexity analysis and the optimization theory based solution methodology for the transmit power and block length optimization problem. Section \ref{reinforce} provides a brief background on the design of the DRL algorithms and describes the proposed event-triggered DRL-based algorithm for the transmit power and block length allocation problem. Section \ref{sec:simulation} evaluates the performance of the proposed solution strategy. Finally, conclusions and future research directions are presented in Section \ref{sec:conclusion}.

\section{System Model and Assumptions} \label{sec:system}
We consider a single-cell  V2I   URLLC system,  where vehicles move on a multi-lane highway with the RSU at the center and a fixed distance from the highway, as shown in Fig. \ref{fig:V2V}. 
The roadside unit (RSU) plays the role of the 5G base station (gNB) for centralized radio resource management \cite{5G-NR} and sends unicast safety-critical information to the vehicles for coordination and safety alerts using short packets. 
We consider a practical scenario where this RSU is deployed in a rural area and deprived of a permanent grid source. We assume the RSU is equipped with large batteries that are periodically recharged using energy harvesting techniques (wind/solar) \cite{rural_battery}.
We assume that the RSU and the vehicle user equipment (VUEs) are equipped with a single transmit-receive antenna. The set of VUEs under the coverage of the RSU is denoted by   $\mathcal{K}=\{1, \ldots, K\}$. This system model represents different  URLLC V2X scenarios, such as time-sensitive High-Definition (HD) map data dissemination for vehicle coordination and localization in autonomous driving, coordinating the trajectories of vehicles to make better maneuvers in remote/teleoperated driving, and cooperative collision avoidance for vehicle safety in practical traffic management scenarios \cite{SC1}.

In NR-V2X mode-1 operation, the RSU centrally allocates the resources for V2X communications. This mode requires the vehicles to report their CSI to the gNB when initiating the V2X communications.  The RSU transmits the CSI reference signal (CSI-RS) through the Physical downlink Control Channel (PDCCH)  for downlink CSI acquisition. Then,  the VUE  decodes the received CSI-RS to acquire channel measurements. Subsequently,  the VUE reports the CSI  to the RSU using the Physical Uplink Control Channel (PUCCH).  The  CSI transmission introduces additional overhead, which can be mitigated by incorporating sensing-assisted communications \cite{ISAC}.  In this approach, the RSU can directly acquire the downlink CSI  based on the vehicles' echo signals, capturing the vehicles' motion parameters from the echo and analyzing the channel quality based on the power of the echo signal.

\begin{figure}[t]
 \centering
\includegraphics[width= 0.9\linewidth]{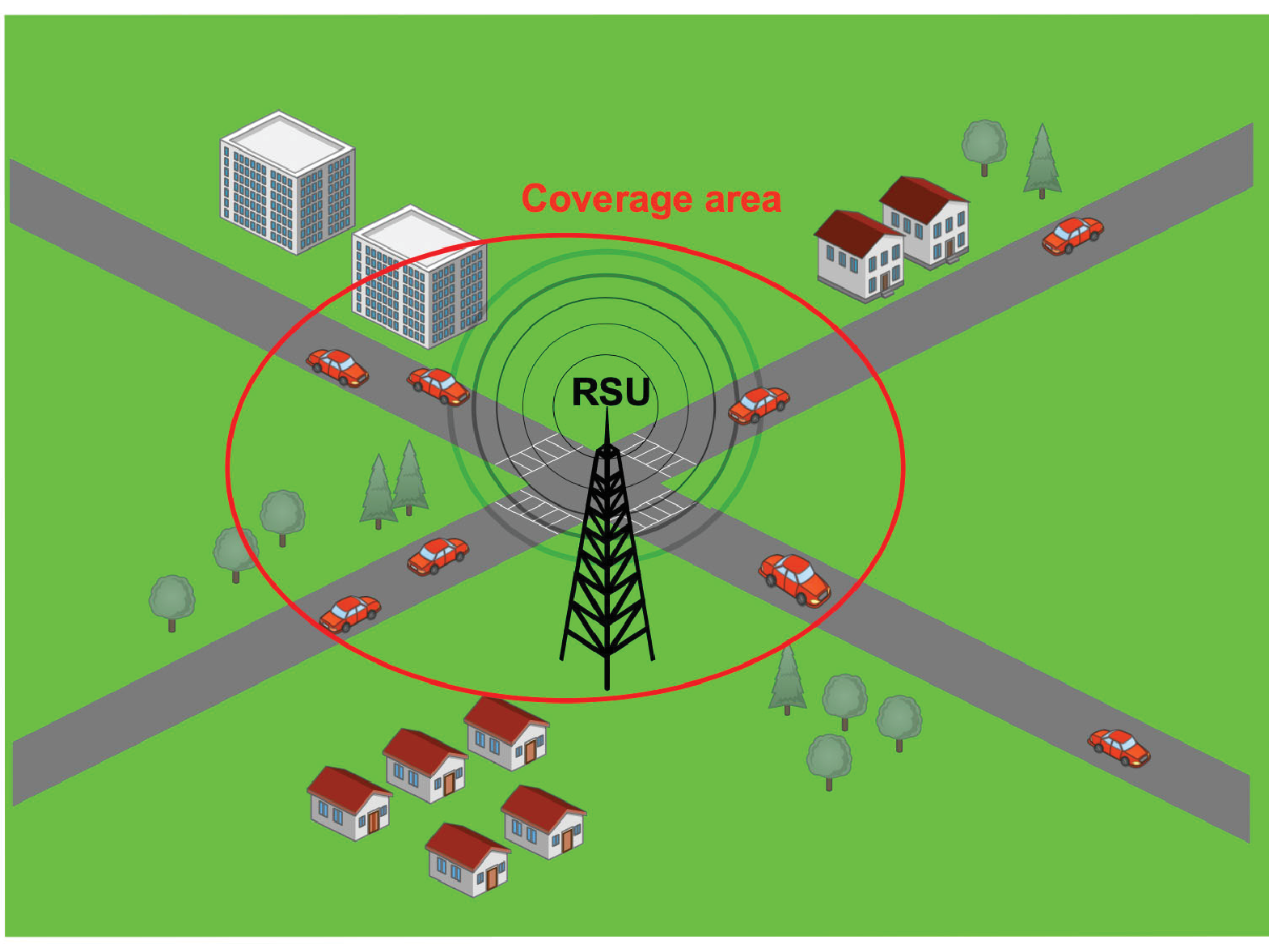 }
\caption{System model for the V2X communication network.} \label{fig:V2V}
\end{figure}
 We consider a slotted communication system with the time period for information transmission discretized into a series of time frames $t \in \{1,2, \cdots, \mathcal{T}\}$.
 The  URLLC-aware resource allocation algorithm sends $K$ short data packets to the $K$  VUEs in each time frame,  and the transmission of these packets is handled in a time-division-multiple-access (TDMA) fashion such that each VUE is allocated with unique block lengths.
To fulfill the requirements of latency-sensitive communications, scheduling is executed at the beginning of each time frame. The entire downlink information transmission is subject to strict restrictions on the latency, requiring completion within a single time frame spanning $M_{D}$ symbols such that  $\sum_{k \in \mathcal{K}} m_{k}^{(t)} \leq M_{D}$, where $ m_{k}^{(t)}$ is the channel block length allocated to the $k^{th}$ VUE. Here, $M_D$ represents the maximum block length, indicating the maximum number of symbols transmitted within a time frame, and ensures timely delivery of all short packets to vehicles within the maximum transmission delay  $T_{max}=\frac{M_D}{B}$, where $B$ is the system bandwidth. This aggregate delay assumption ensures that the packets generated for all the VUEs during a time frame are scheduled in the next frame, and this assumption is considered in several other works \cite{r13, r14, rr15}.  Moreover, the continuous transmit power allocated to the $k^{th}$ VUE  in time frame $t$, denoted by $P_{k}^{(t)} \geq 0$, is subject to the constraint $ \left(\sum_{k=1}^{K}\frac{ P_{k}^{(t)} m_{k}^{(t)}}{M_{D}}\right) \leq P_{\max }$, where $\frac{m_{k}^{(t)}}{M_{D}}$ is the fractional block length allocated to the $k^{th}$ VUE, and $P_{\max }$ is the RSU  power budget. 

The downlink channel gain from the RSU to the $k^{th}$ VUE in time frame $t$ is denoted by   
$h_{k}^{(t)}=\alpha_{k}^{(t)}\left|g_{k }^{(t)}\right|^2$, where $\alpha_{k}^{(t)}$ denotes the large-scale fading (path loss and log-normal shadowing) and $g_{k}^{(t)}$  denotes the complex Gaussian random variable with Rayleigh distributed envelope.  We utilize a quasi-static flat fading channel model where the channel remains constant in the coherent time
$\mathcal{T}_c$  but might change with correlation patterns across consecutive time frames.  The  channel coherence time is given by $\mathcal{T}_c=\sqrt{\frac{9}{16 \pi f_D^2}},$ where $f_D = \frac{f_c \cdot v_s}{c}$ is  the Doppler frequency, $f_c$ is the carrier frequency, $v_s$ is the vehicle velocity, and $c$ is the speed of light \cite{channel_coherence}. For moderate vehicular speeds, the transmission time interval, considered equal to the duration of one frame, is far less than the channel coherence time, i.e., $\mathcal{T}_c > t$ holds. Thus,  the CSI can be regarded as constant throughout the information transmission period, i.e., the channel remains static over $M_D$ symbols.  We use a first-order Gauss-Markov process to describe the relationship between the small-scale Rayleigh fading in two consecutive time frames according to Jake's statistical model \cite{rr21}:
\begin{equation}\label{jake}
g_{k}^{(t)}=\rho\hspace{0.03cm} g_{k}^{(t-1)}+w_{k}^{(t)},
\end{equation}
where $\rho\,(0 \leq \rho \leq 1) $ is the correlation coefficient between two consecutive small-scale fading realizations, $w_k{^{(t)}}\sim\mathcal{CN} \left(0,1-\rho^{2}\right)$ is the channel discrepancy term, $g_{k}^{(0)}\sim\mathcal{CN} (0,1)$ and $\mathcal{CN }$ represents a circularly symmetric complex normal  Gaussian random. The correlation coefficient is given by $\rho=J_{0}\left(2 \pi f_{D} T\right)$, where $J_{0}(.)$ is the zeroth-order Bessel function,   and $T$ is the time interval over
which the correlated channel variation occurs.  

In the context of finite block length regime,  the achievable information rate for the $k^{th}$ VUE, which can be decoded with decoding error probability no greater than $\varepsilon_{k}^{(t)}$  is given by the normal approximation \cite{r3} 
 \begin{equation}\label{eq:rate}
     R_{k}^{(t)}\approx \left(\log _{2}\left(1+\gamma_{k}^{(t)}\right)-\sqrt{\frac{V_{k}^{(t)}}{m_{k}^{(t)}}} \frac{f_{Q}^{-1} \left(\varepsilon_{k}^{(t)}\right)}{\ln 2}\right),
 \end{equation}
where  $V_{k}^{(t)}\triangleq1-(1+\gamma_{k}^{(t)})^{-2}$ is the channel dispersion, $\varepsilon_{k}^{(t)}\in(0,1)$ is the  decoding error probability, $\gamma_{k}^{(t)}= \left(\frac{P_{k}^{(t)} h_{k}^{(t)}}{\sigma^{2}}\right)$ is the signal-to-noise-ratio (SNR) for the  $k^{th}$  VUE in time frame $t$, $\sigma^{2}$ is the noise power and  $f_{Q}^{-1}(\cdot)$ is the inverse of the function $f_{Q}(x) = \frac{1}{\sqrt{2\pi}} \int_x^{\infty} e^{-\frac{t^2}{2}} \, dt$. Eq. (\ref{eq:rate}) is a non-asymptotic approximation for the achievable rate for finite block length codes. Practical coding schemes (e.g., extended BCH, LDPC, Polar codes)  closely approach the normal approximation while maintaining a relatively constant gap in the block error rate to the finite block length fundamental limit \cite {short_LDPC}. Therefore, we consider Eq. (\ref{eq:rate}) a tight and accurate approximation for the maximal coding rate under finite block length transmissions.

We consider a fixed payload size of $L$ bits per vehicle. For a given block length $m_k^{(t)}$, the  coding rate for the $k^{th}$ VUE in time frame $t$ becomes $R_{k}^{(t)}=\frac{L}{m_{k}^{(t)}}$ [bits/seconds/Hz]. By rearranging Eq. (\ref{eq:rate}), the decoding error probability at the $k^{th}$ VUE can be expressed as
\begin{equation}\label{eq:error}
\begin{aligned}
\varepsilon_{k}^{(t)}\Big(\gamma_{k}^{(t)}, m_{k}^{(t)}, L\Big) \approx f_{Q}\left(  \frac{\sqrt{m_{k}^{(t)}} \ln 2}{\sqrt{V_{k}^{(t)}}}\left(C_k^{(t)}-\frac{L}{m_{k}^{(t)}}\right) \right),
\end{aligned}
\end{equation}
where $C_k^{(t)}=\log _{2}\left(1+\gamma_{k}^{(t)}\right)$ is the Shannon rate. For  typical URLLC-enabled  V2I
communications for traffic efficiency/safety, the codewords are required
to be short \cite{channel_coherence}, which can easily satisfy the URLLC requirement subject to the availability of sufficient SNR. 
When the instantaneous CSI of the downlink channel is poor, i.e., $\gamma_{k}\leq \gamma_{th}=0$ dB, the reliability performance cannot be guaranteed even with maximally available block length. In such a scenario, we assume that the RSU does not allocate resources and leaves it to the next time frame.

\section{Worst-case Decoding Error Probability Minimization Problem} \label{sec:opt_prob}
In this section, we present the mathematical formulation of the \emph{Min-Max} decoding-error probability problem denoted by $\cal{MMDEP}$. In this section, we omit the index  $(t)$ for
notational simplicity.

The joint optimization of the transmit power and block length allocation to minimize the worst-case decoding error probability under stringent latency and reliability constraints is formulated as follows:

$\cal{MMDEP}$:
\begin{subequations} \label{opt_problem} 
\begin{align}
& &  \min_{ \boldsymbol{P}{},\hspace*{0.02cm}\boldsymbol{m}{} }& \quad \max _{k \in {K}}\hspace*{0.2cm} \varepsilon_{k}
\label{obj}\\
& \textit{subject to}
&& \sum_{k=1}^{K} \frac{m_kP_{k}}{M_{D}}\leq P_{max}   \label{pmax} \\
&&&  \sum_{k=1}^{K}m_{k}\leq  M_{D}, \hspace*{0.1cm}  \label{latency}\\
&&& \varepsilon_{k}\leq \varepsilon_{max, k}, \hspace*{0.1cm}  k \in \{1,...,K\} \label{reliability} \\
& \textit{variables}
& & P_{k} \geq 0,\hspace*{0.1cm} m_{k} \in \mathbb{Z}^{+}.
\label{pcp_vars}
\end{align}
\end{subequations}
where  $\boldsymbol{P}{}$ and  $\boldsymbol{m}$ are the transmit power and block length allocation vectors of the network, i.e., $\boldsymbol{P}{}=\left[P_1{}, \ldots, P_K\right]$ and $\boldsymbol{m}=\left[m_1{}, \ldots, m_K\right]$, and $ m_{k} \in \mathbb{Z}^{+}$ indicates the set of all positive integers. The variables of the optimization problem are $ {P_{k}} $, the transmit power allocated to the $k^{th}$ VUE; ${m_{k}}$, the block length in units of symbols or channel uses allocated to the $k^{th}$ VUE, \hspace*{0.05cm}$ \forall k \in \left\{ 1, \cdots, K\right\}$.

The objective of the optimization problem is to minimize the worst-case decoding error probability for the vehicular network as given by  (\ref{obj}).  (\ref{pmax})  represents the total power budget constraint, ensuring that the sum of the product of fraction of the time allocated to each VUE and its corresponding power allocation is within the total power budget of the RSU. (\ref{latency}) gives the latency constraint: The total block length assigned to all the VUEs should be less than or equal to the maximum number of symbols allowed to conduct the information transmission.  To ensure the quality of transmission, (\ref{reliability}) represents the upper bound of tolerable packet error probability for each VUE, with $\varepsilon_{max, k}$ denoting the corresponding quality of service requirement for the $k^{th}$ vehicle. Finally, (\ref{pcp_vars}) represents a non-negative transmit power constraint and the positive integer constraint for the block length assigned to VUE.

The optimization problem under consideration is a non-convex mixed integer nonlinear programming (MINLP) due to the non-convex objective function  (\ref{obj}), non-convex constraint  (\ref{pmax}) and the integer constraint for the block length  (\ref{pcp_vars}). Therefore, directly achieving the global optimal solution is extremely difficult, i.e., finding a global optimum requires algorithms with exponential complexity. Next, we propose an optimization theory based solution and an event-triggered DRL-based framework to solve this problem.
\section{ Joint Optimization of Power and block length} \label{sec:joint}
In this section, we reformulate the non-convex optimization problem $\cal{MMDEP}$ using the variable transformation method and relaxation of the integer block length constraint. The reformulated optimal power and block length problem is denoted by $\cal{OPBP}$. We show that within the region of interest, $\cal{OPBP}$ is jointly convex with respect to block length and transmit power as decision variables. We then derive the optimality conditions using KKT analysis and propose a Joint Optimization Algorithm ($\cal{JOA}$) to solve this problem. The $\cal{JOA}$ efficiently solves $\cal{OPBP}$ using standard convex optimization tools. It further employs a low-complexity suboptimal algorithm to convert the block lengths into integer values.

To facilitate the derivation for solving the $\cal{OPBP}$, we first relax the integer constraint for block length in (\ref{pcp_vars}), i.e., $m_{k}\in \mathbb{R}^{+}$. Second, we exploit the variable transformation method, which is used to illustrate the joint-convexity of $\cal{OPBP}$ in the decision variables. In particular, we define $a_{k}\triangleq \frac{1}{m_{k}}$ and $b_{k}\triangleq {\sqrt{P_{k}}}$. Additionally, to handle the non-differentiable min-max objective function, we apply the epigraph transformation and minimize an upper-bound $\eta$ as an alternate objective for minimizing the maximum decoding-error probability. This transformation facilitates the solution design while keeping the original objective and constraints intact. Then, the $\cal{OPBP}$ can be  mathematically reformulated as follows: 

$\cal{OPBP}$:
\begin{subequations} \label{opt_problem_epigraph} 
\begin{align}
&&&  \min_{ \boldsymbol{a},\boldsymbol{b},\mathbf{\eta} } \quad\eta \label{obj1}\\
& \textit{subject to}
&&  \sum_{k=1}^{K}\frac{ {b_{k}^2} }{a_{k}}\leq P_{max}M_{D}\hspace*{0.1cm} \label{pmax1} \\ 
&&&  \sum_{k=1}^{K} \frac{1}{a_{k}}\leq  M_{D}, \label{latency1}\\
&&& \varepsilon_{k}\left(a_{k},b_{k}\right)\leq \eta, \hspace*{0.1cm}  k \in \{1,...,K\} \label{reliability1} \\
&&& \varepsilon_{k}\left(a_{k},b_{k}\right)\leq \varepsilon_{max, k}, \hspace*{0.1cm}  k \in \{1,...,K\} \label{reliability11} \\
& \textit{variables}
& &  a_{k} \geq 0,\hspace*{0.1cm} b_{k}\geq 0, \hspace*{0.1cm} \eta \in {[0,1]} , \hspace*{0.1cm}  k \in \{1,...,K\}.\label{pcp1_vars}
 \end{align}
\end{subequations}

 where   $\boldsymbol{a}{}=\left[a_1{}, \ldots, a_K\right]$ and $\boldsymbol{b}=\left[b_1{}, \ldots, b_K\right]$, and  $\eta$, $a_{k}$ and $b_{k}$, $\forall k \in \{1, \ldots, K\}$ are the decision variables of $\cal{OPBP}$. To this end, we first provide the following Lemma as a first solution step for solving the $\cal{OPBP}.$

\begin{lemma} \label{lemma_convexity}
Under the assumption that  $\gamma_{k}\left(b_{k}\right)\geq \gamma_{th}=0$ dB and the Shannon rate exceeds the coding rate, i.e., $C_{k}\left(\gamma_{k}\left(b_{k}\right)\right)- a_{k}L >0 $, the  $\cal{OPBP}$ is a convex optimization problem when $m_{k}$ satisfies:
\begin{equation} \label{eq:bl_constraint} 
m_{k} \leq \min \left\{{ 5\ln\left(2\right) \gamma_{k}\left(b_{k}\right) L }\hspace{0.09cm},\hspace{0.09cm} \frac{ 3\ln{\left(2\right)}10\sqrt{10\gamma_{k}\left(b_{k}\right)}L}{4}\right\},
\end{equation}
\end{lemma}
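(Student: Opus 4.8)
The plan is to establish joint convexity of $\cal{OPBP}$ by checking each of its constituents. The objective $\eta$ is linear, so the work reduces to showing that the feasible set is convex, i.e. that each constraint function is convex in $(\boldsymbol{a},\boldsymbol{b},\eta)$. Constraint (\ref{latency1}) is a sum of reciprocals $1/a_k$ with $a_k>0$, which is convex; constraint (\ref{pmax1}) is a sum of quadratic-over-linear terms $b_k^2/a_k$, each of which is jointly convex on $\{a_k>0\}$ (a standard perspective-function fact). The real content is therefore constraints (\ref{reliability1}) and (\ref{reliability11}): I must show that $\varepsilon_k(a_k,b_k)$, as given in (\ref{eq:error}), is a jointly convex function of $(a_k,b_k)$ on the region carved out by the hypotheses $\gamma_k(b_k)\ge 1$ (i.e. $0$~dB), $C_k(\gamma_k(b_k)) - a_k L > 0$, and the block-length bound (\ref{eq:bl_constraint}).

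First I would write $\varepsilon_k = f_Q\!\big(\phi_k(a_k,b_k)\big)$ where
\begin{equation}\label{eq:phidef}
\phi_k(a_k,b_k) = \frac{\ln 2}{\sqrt{V_k}\,\sqrt{a_k}}\Big(C_k - a_k L\Big),
\end{equation}
using $m_k = 1/a_k$ and $V_k = 1 - (1+\gamma_k)^{-2}$, with $\gamma_k = b_k^2 h_k/\sigma^2$. Since $f_Q$ is decreasing, the composition rule for convexity tells me that $f_Q\circ\phi_k$ is convex provided $\phi_k$ is concave and, on the relevant range, $f_Q$ is convex nonincreasing in the direction of interest. The subtlety is that $f_Q$ is convex only on $\{x \ge 0\}$ and concave on $\{x \le 0\}$; the hypothesis $C_k - a_k L > 0$ together with $\gamma_k \ge \gamma_{th}$ is precisely what keeps the argument $\phi_k \ge 0$ in the region where $f_Q$ is convex, so the composition lemma ("convex nonincreasing of concave is convex") applies once I also verify $\phi_k$ is concave there. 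Thus the crux becomes: \emph{show $\phi_k(a_k,b_k)$ is jointly concave in $(a_k,b_k)$ under (\ref{eq:bl_constraint}).}

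To do that I would compute the Hessian of $\phi_k$ and show it is negative semidefinite. It is cleaner to treat this in two stages: (i) fix $b_k$ (hence $\gamma_k$, $C_k$, $V_k$) and check concavity in $a_k$ of $g(a_k) = a_k^{-1/2}(C_k - a_k L) = C_k a_k^{-1/2} - L a_k^{1/2}$ — both terms are concave for $a_k>0$, so this direction is immediate and needs no block-length restriction; (ii) the mixed and pure-$b_k$ second derivatives are where $\gamma_k$ varies, and here the channel-dispersion factor $1/\sqrt{V_k}$ and the Shannon term $C_k = \log_2(1+\gamma_k)$ interact. I expect that after substituting $\gamma_k = b_k^2 h_k/\sigma^2$ and differentiating, the off-diagonal terms and the $\partial^2/\partial b_k^2$ term produce an expression whose sign is controlled by how large $m_k = 1/a_k$ is relative to $\gamma_k L$ and $\sqrt{\gamma_k}\,L$; bounding $m_k$ by the minimum of $5\ln(2)\gamma_k L$ and $\tfrac{3}{4}\ln(2)\,10\sqrt{10\gamma_k}\,L$ should be exactly the condition that forces the $2\times 2$ determinant of the Hessian to be nonnegative while the diagonal entries stay nonpositive. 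I would likely bound $V_k \in [3/4, 1)$ using $\gamma_k \ge 1$ to get clean numerical constants, and use $\ln(1+\gamma_k) \le \gamma_k$ and similar elementary inequalities to replace the transcendental $C_k$ by polynomial bounds in $\gamma_k$.

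The main obstacle I anticipate is the joint (mixed-partial) part: verifying negative semidefiniteness of the full $2\times 2$ Hessian of $\phi_k$ rather than just concavity along the two axes, because the cross term $\partial^2\phi_k/\partial a_k \partial b_k$ does not vanish and the determinant condition is what generates the somewhat unusual constants $5\ln 2$ and $\tfrac{3}{4}\ln(2)\,10\sqrt{10}$ in (\ref{eq:bl_constraint}). A secondary point to handle carefully is that (\ref{eq:bl_constraint}) is stated as an upper bound on $m_k$ while the decision variable is $a_k = 1/m_k$; I would rewrite it as a lower bound $a_k \ge \max\{\,(5\ln(2)\gamma_k L)^{-1},\ 4/(3\ln(2)\,10\sqrt{10\gamma_k}\,L)\,\}$ so the feasible region is manifestly the intersection of convex/"nice" sets, and note that $\gamma_k$ itself depends on $b_k$, so strictly speaking the region of validity is $\{(a_k,b_k): a_k \,m_k\text{-bound holds}\}$, which I should confirm is itself convex (or at least that convexity of $\varepsilon_k$ holds pointwise on it, which is all the KKT argument in $\cal{JOA}$ needs). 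Once the Hessian sign is pinned down, assembling the pieces — linear objective, convex constraints (\ref{pmax1})–(\ref{latency1}), convex $\varepsilon_k$ in (\ref{reliability1})–(\ref{reliability11}) — gives convexity of $\cal{OPBP}$ and completes the proof.
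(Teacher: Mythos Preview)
Your composition strategy hinges on showing that the inner argument $\phi_k(a_k,b_k)$ is \emph{concave}, so that ``convex nonincreasing $\circ$ concave $=$ convex'' applies. But the claim in step (i) is false: with $b_k$ fixed, $g(a_k)=C_k a_k^{-1/2}-L a_k^{1/2}$ has $g''(a_k)=\tfrac{3}{4}C_k a_k^{-5/2}+\tfrac{1}{4}L a_k^{-3/2}>0$, so $g$ is \emph{convex} in $a_k$, not concave (both summands are convex: $a^{-1/2}$ is convex and $-a^{1/2}$ is convex). Hence $\phi_k$ cannot be jointly concave, and the composition rule you invoke does not apply. Since $f_Q$ is convex decreasing on $[0,\infty)$, ``convex decreasing $\circ$ convex'' gives no convexity conclusion in general (e.g.\ $e^{-x^2}$ is concave near $0$), so the route is blocked, not merely in need of a sign fix.

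The paper does not attempt to prove concavity of the inner function. Instead it works directly with the Hessian of $\varepsilon_k(a_k,b_k)=f_Q(\beta(a_k,b_k))$ itself and checks that its first principal minor and determinant are nonnegative under $\gamma_k\ge 0$~dB and the block-length bound (\ref{eq:bl_constraint}). The point is that in $\partial_{a}^2\varepsilon_k=f_Q''(\beta)(\beta_a)^2+f_Q'(\beta)\beta_{aa}$, the second term is negative (since $f_Q'<0$ and $\beta_{aa}>0$), but the first term $f_Q''(\beta)(\beta_a)^2>0$ can dominate; condition (\ref{eq:bl_constraint}) is exactly what makes this and the $2\times2$ determinant work out. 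So you should abandon the concavity-of-$\phi_k$ plan and instead bound the principal minors of $\nabla^2\varepsilon_k$ directly, keeping the $f_Q''$ contribution in play. Your treatment of the linear objective and of constraints (\ref{pmax1})--(\ref{latency1}) is fine.
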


\begin{proof}
Since $f_{Q}(x)$ is a decreasing function of its argument and $\varepsilon_{k} < \varepsilon_{\text{max}, k} < f_{Q}(0) = 0.5$, a necessary condition for constraint (\ref{reliability11}) to hold is $C_{k}\left(\gamma_{k}\left(b_{k}\right)\right) - a_{k}L  > 0, \forall k \in \mathcal{K}$. Further, let us define  
$\beta\left(a_{k},b_{k}\right) \triangleq \left( \frac{ \ln 2}{\sqrt{a_{k}}\sqrt{V_{k}\left(b_{k}\right)}}\left( C_{k}\left(\gamma_{k}\left(b_{k}\right)\right) - a_{k}L \right)\right)$.
Then, $\beta\left(a_{k},b_{k}\right)$ is non-negative and the second-order derivative of  $\varepsilon_{k}\left(a_{k},b_{k}\right) = Q\left(\beta\left(a_{k},b_{k}\right)\right)$ is given by 
\begin{equation}
  \frac{\partial^{2} \varepsilon_{k}\left(a_{k},b_{k}\right)}{\partial \beta^{2}} = \frac{\beta}{\sqrt{2 \pi}} e^{-\frac{\beta^{2}}{2}} > 0.
\end{equation}

Therefore, the decoding error probability decreases monotonically and is a convex function with respect to  $\beta$. Furthermore,  under the condition that $\gamma_{k}\left(b_{k}\right)\geq \gamma_{th}=0$ dB, the first principal minor and the determinant of the Hessian matrix of 
$\varepsilon_{k}\left(\beta\left(a_{k},b_{k}\right)\right)$ are non-negative when (\ref{eq:bl_constraint}) is satisfied . Therefore, $\varepsilon_{k}\left(\beta\left(a_{k},b_{k}\right)\right)$ is jointly convex in the auxiliary variables $a_{k}$ and $b_{k}$ in the considered high reliability scenario. Consequently, the  $\cal{OPBP}$ is a convex optimization problem considering the linearity of the objective (\ref{obj1}) and convexity of the constraints (\ref{pmax1})-(\ref{reliability11}) in variables $a_{k}$, $b_{k}$ and $\eta$. 
\end{proof}

As $\cal{OPBP}$ is a convex optimization problem, it can be solved by analyzing its KKT conditions to reach an optimal solution.
The corresponding Lagrangian function for (\ref{opt_problem_epigraph}) can be written as
\begin{equation}\label{lag_func}
\begin{aligned}
L\left(a_{k},b_{k},\eta,\lambda_{i,k}\right)= 
& \Bigg[\eta +\lambda_{1}\Bigg(\sum_{k=1}^{K}\frac{b_{k}^{2}}{a_{k}}-P_{max}M_{D}\Bigg)\hspace*{0.15cm} + \\
& \lambda_{2} \Big(\sum_{k=1}^{K}\frac{1}{a_{k}} -  M_{D}\Big) + \sum_{k=1}^{K} \lambda_{3,k}\left( \varepsilon_{k}-\eta\right)+ \\
& \sum_{k=1}^{K} \lambda_{4,k}\left( \varepsilon_{k}-\varepsilon_{max, k}\right)\Bigg].
\end{aligned}
\end{equation}

Then, the KKT conditions of $\cal{OPBP}$ are as follows:

\small
\begin{subequations} \label{eqs:gradients} 
\begin{align}
&\Bigg\{\left(\lambda_{3,k}+\lambda_{4,k}\right)\left(\frac{m_{k}^{2}\ln{2} \exp{\frac{-\beta^{2}}{2}}}{\sqrt{2\pi}}\right)\left(\frac{1}{2} m_{k}^{-\frac{1}{2}} V_{k}^{-\frac{1}{2}} C_{k} + \frac{1}{2} m_{k}^{-\frac{3}{2}} V_{k}^{-\frac{1}{2}} L\right) \nonumber \\
&-\lambda_{1}\left(\sum_{k=1}^{K}P_{k}m_{k}^{2}\right)\hspace*{0.15cm}-\lambda_{2}\left(\sum_{k=1}^{K}m_{k}^{2}\right)\Bigg\}=0,\hspace*{0.2cm} k \in \{1,...,K\},\label{gra1} \\
&\Bigg\{(\lambda_{3,k}+\lambda_{4,k})\left(\frac{\sqrt{m_{k}} \sqrt{P_{k}}(H_{k}^{3}P_{k}^{2} +2H_{k}^{2}P_{k}-H_{k}C_{k})+\frac{L\sqrt{P_{k}}H_{k}}{\sqrt{m_{k}}}}{(1+H_{k}P_{k})^3}  \right)\nonumber \\
&\times\left(\frac{-2\ln{2}V_{k}^{-\frac{3}{2}} \exp{\frac{-\beta^{2}}{2}}}{\sqrt{2\pi}}\right)  +\lambda_{1}\left(2\sum_{k=1}^{K}\sqrt{P_{k}}m_{k}\right) \Bigg\}=0,\hspace*{0.2cm} k \in \{1,...,K\}, \label{gra2} \\ 
&\Bigg\{1-\sum_{k=1}^{K}\lambda_{3,k}\Bigg\}=0,\hspace*{0.2cm} k \in \{1,...,K\},\label{gra3}
\end{align}
\end{subequations}
\normalsize
\begin{subequations}
\label{eqs:comp}
\begin{flalign}
& \lambda_{1}\left(\sum_{k=1}^{K} P_{k}m_{k}- P_{max}M_{D}  \right)=0, \hspace*{0.1cm} k \in \{1,...,K\},  \label{comp1} \\
& \lambda_{2} \left(\sum_{k=1}^{K}m_{k} -  M_{D}\right)=0,  \label{comp2} \\
& \lambda_{3,k}\left(\varepsilon_{k}- \eta\right)=0, \hspace*{0.1cm} k \in \{1,...,K\},  \label{comp3} \\
& \lambda_{4,k}\left(\varepsilon_{k}- \varepsilon_{max, k}\right)=0, \hspace*{0.1cm} k \in \{1,...,K\},  \label{comp44}
\end{flalign}
\end{subequations}
where $H_{k}=\frac{h_{k}}{\sigma^{2}}$.  (\ref{gra1})-(\ref{gra3}) and  (\ref{comp1})-(\ref{comp44}) represents the stationarity and complementary slackness conditions, respectively, and $\lambda_{i,k}\geq 0$ is the Lagrange multiplier associated with the $i^{th}$ constraint of the $k^{th}$ VUE, $i \in \{1,2,3,4\}$ corresponding to constraints (\ref{pmax1})-(\ref{reliability11}), respectively.  As the $\cal{OPBP}$ is a convex optimization problem as per Lemma \ref{lemma_convexity}, which satisfies the slater’s constraint qualification condition, the KKT conditions are sufficient conditions for an optimal solution of the $\cal{OPBP}$. Therefore, the optimal solution for block lengths $\boldsymbol{m}=\left\{m_{1}, \cdots, m_{K}\right\}$, and transmit powers $\boldsymbol{P}=\left\{P_{1}, \cdots, P_{K}\right\}$ can be readily obtained by using standard convex solvers. 

Since the block length solution obtained from the relaxed integer problem may violate the original integer constraint (\ref{pcp_vars}), a low-complexity greedy search methodology is adopted to convert the positive continuous block length values to integer solution \cite{r22}. The $\cal{JOA}$ for solving $\cal{MMDEP}$ problem is summarized in Algorithm \ref{convex_algo}. The  $\cal{JOA}$ starts by solving the convex $\cal{OPBP}$ problem (\ref{opt_problem_epigraph}) to generate the non-integer block length vector denoted  by $\boldsymbol{m}^o= \left[ m_{1}^{o}, \cdots, m_{K}^{o}\right]$ (Line $1$). The integer solution for block length vector is then initialized to $\boldsymbol{\tilde{\boldsymbol{m}}_k}=  \left[ \tilde{{m}}_1 , \cdots,  \tilde{{m}}_{K}\right] = \lfloor m_{1}^{o}, \cdots, m_{K}^{o}\rfloor$, where  $\lfloor\cdot\rfloor$ indicates the floor function (Line $2$). Then,  there are  $M_{\text {UA }}=\sum_{k=1}^K m_k^{o}-\sum_{k=1}^K \tilde{{m}}_k$ unassigned block lengths. Since the decoding-error probability is a decreasing function of the block length $m_{k}$, the $\mathcal{JOA}$ allocates one unassigned symbol to the VUE, which results in the largest decrement of the decoding-error probability, i.e., $\text{arg}\min_{k\in K  } \left(\varepsilon_{k}(\tilde{m}_{k}+1)-\varepsilon_{k}(\tilde{m}_{k})\right), \forall k \in \{1,...,K\}$. This process is repeated till all unassigned block lengths are allocated (Lines $4-8$). The $\cal{JOA}$ terminates after updating the final transmit power solution  $\boldsymbol{\tilde{P}}$  based on the integer block length solution $\boldsymbol{\tilde{m}}$ obtained via the greedy search method (Line $9$).

\begin{algorithm}
\caption{$\cal{JOA}$ for Solving $\cal{MMDEP}$ Problem } \label{convex_algo}
\begin{algorithmic}[1]
\STATE solve the convex problem (\ref{opt_problem_epigraph}) to generate \hspace{0.001cm} $\boldsymbol{m}^o= \left[m_{1}^{o}, \cdots, m_{K}^{o}\right]$. 
\STATE obtain initial integer block length solution $\boldsymbol{\tilde{\boldsymbol{m}}_k}= \lfloor m_{1}^{o}, \cdots, m_{K}^{o}\rfloor$.
\STATE determine unassigned block lengths  $M_{\text {UA }}=\sum_{k=1}^K m_k^{o}-\sum_{k=1}^K \tilde{{m}}_k$.
\WHILE {$M_\text{UA }>$\hspace{0.1cm}0}
\STATE $k^{\star} \leftarrow \text{arg}\min_{k\in K  } \left(\varepsilon_{k}(\tilde{m}_{k}+1)-\varepsilon_{k}(\tilde{m}_{k})\right)$,
\STATE $\tilde{m}_{k^{\star}}$ $\leftarrow$ $\tilde{m}_{k^{\star}} + 1$,
\STATE $M_\text{UA }$ $\leftarrow$ $M_\text{UA }-1$,
\ENDWHILE
\STATE obtain $\boldsymbol{\tilde{P}}$ using $\boldsymbol{\tilde{m}}$.
\end{algorithmic}
\end{algorithm} 
 The primary computational complexity of the proposed  $\cal{JOA}$ comes from the concurrent solution of the block length and power allocation vectors with a computation cost of  $\mathcal{O}\left( (K^2+1)\right)$ for $K$ VUEs within the network \cite{convergence}. Additionally, the integer conversion of block length variable via the proposed greedy search method necessitates the iterative allocation of $M_\text{UA}$ unassigned symbols to the $K$ VUE, with a computation cost of $\mathcal{O}\left( M_\text{UA}\right)$. Thus, the overall computational complexity is  $\mathcal{O}\left( (K^2+1) + M_\text{UA}\right)$.

To ensure optimality, it is crucial to determine the transmit power and block length at the beginning of each time frame. However, ensuring timely resource allocation in URLLC V2X environments using the $\cal{JOA}$ presents three primary challenges. First, the time complexity of $\cal{JOA}$ grows quadratically with network size, rendering it impractical for large-scale networks where solutions quickly become outdated due to short coherence time intervals. Second, $\cal{JOA}$ relies on instantaneous global CSI, which may be challenging to acquire in dynamically changing vehicular networks. Third, $\cal{JOA}$ considers a myopic view of the V2X environment, focusing on optimizing individual time slots without considering historical network state information. This approach may lead to sub-optimal solutions and poor performance. 

To address these challenges, an RL-based strategy offers long-term benefits by modeling the optimization problem as a sequential decision-making process. Using an RL-based framework, the network’s historical wireless data can effectively optimize current choices in real-time, a capability absent in conventional optimization-based solutions like the $\cal{JOA}$. Next, we propose a robust event-triggered DRL-based strategy to solve the $\cal{MMDEP}$.

\section{Event-Triggered DRL-Based Power and block length allocation Scheme }\label{reinforce}
 In this section, we propose a DRL-based design to solve the $\cal{MMDEP}$ problem. To build up the foundation for the proposed DRL-based strategy, a  brief overview of the two related DRL methods, Deep Q-network (DQN) and Deep deterministic policy gradient (DDPG), is first provided, followed by the description of the state, action, reward function, for transforming the joint resource allocation problem into an RL framework. An event-triggered mechanism is introduced based on input state similarity to determine whether to trigger the DRL process or not.  Finally, the event-triggered DRL-based algorithm for the optimization of transmit power and block length is proposed.

\subsection{Deep Reinforcement Learning Overview}\label{DRL_overview}
Reinforcement learning is a decision-making process where an agent learns the outcome of its actions over time through repeated trial-and-error interactions with its environment \cite{RL_book}. Let $\mathcal{S}$  denote a set of
possible states and $ \mathcal{A}$ indicate the set of possible actions. In each time frame, the agent interacts with the environment, observes the current state $s^{(t)} \in \mathcal{S}$, takes action $a^{(t)} \in  \mathcal{A}$ according to the current policy $\pi\left(s^{(t)}, a^{(t)}\right)$, where $\pi\left(s^{(t)}, a^{(t)}\right)$: $\mathcal{S} \mapsto \mathcal{A}$ is the probability of taking action $a^{(t)}$ given the current state $s^{(t)}$, receives an immediate reward $r^{(t)}$  and thereafter transitions to a next state   $s^{\left(t+1\right)}$. This process is repeated, and in each time frame, the agent collects the tuple $e^{(t)}=\left(s^{(t)},a^{(t)},r^{(t)},s^{\left(t+1\right)}\right)$  as an experience.  The objective of  the RL-based system design is to find an optimal policy that maximizes the expected return, where the return $G^{(t)}$ is defined as the cumulative total discounted reward from an initial state $s^{(t)}$  and is mathematically expressed as 
\begin{equation}
G^{(t)} =\sum_{i=0}^{\infty} \gamma^{i} r^{(t+i)}\hspace{0.8cm}  0\le\gamma\le 1, 
\end{equation}
where the discount factor $\gamma$ describes the impact of future rewards relative to those in the immediate future. Under a certain policy $\pi\left(s^{(t)},a^{(t)}\right)$, the associated $Q$-function value is defined as  the total expected reward of executing action $a^{(t)}$ under state $s^{(t)}$, i.e.,
\begin{equation}
    Q^{\pi}\left(s^{(t)},a^{(t)}\right) =\mathbb{E}_{\pi}\left[G^{(t)} \mid s^{(t)}, a^{(t)}\right].
\end{equation}

The optimal  $Q$-function $Q^{\pi^{*}}\left(s^{(t)}, a^{(t)}\right)$  associated with the optimal policy $\pi^{*}\left(s^{(t)}, a^{(t)}\right)$ can be obtained by solving the Bellman equation recursively \cite{RL_book}. However, obtaining the optimal $Q$-function is challenging, particularly with the large
dimensional state and action space. To overcome this curse of dimensionality, a DQN denoted as $Q^{\pi}\left(s^{(t)},a^{(t)};\theta^{(t)}\right)$ parameterized by  $\theta^{(t)}$,  is trained to approximate $Q^{\pi}\left(s^{(t)}, a^{(t)}\right)$  from which the corresponding optimal policy $\pi^{*}\left(s^{(t)},a^{(t)}\right)$ can be extracted. Additionally, to stabilize the learning,  two DQNs are utilized during the training process: an evaluation network with weight parameters ${\theta^{(t)}}$ and a target network with parameters $\bar{\theta^{(t)}}$. The target network parameters are periodically updated by copying the weights from the evaluation network. The DRL agent continuously collects new experiences and stores them in the experience replay memory  $\mathcal{M}$, which works in a \textit{first-in-first-out} (FIFO) manner.  Then, by sampling a mini-batch $\mathcal{M}^{mini}$ of  experiences with length $\|B\|\,\left( B=\left\{e_1, e_2, \cdots, e_B\right\}\right)$  from $\mathcal{M}$, the DRL agent can update $\theta^{(t)}$ by adopting a proper optimizer to minimize the  following mean-squared loss function
\begin{equation}\label{loss_DQN}
\begin{aligned}
    {L}\,(\theta^{(t)}) = \frac{1}{\|B\|}\sum_{e\,\in\,{B}}\left\{\left(y^{(t)}(s^{(t+1)},r^{(t+1)})-Q^{\pi}(s^{(t)}, a^{(t)} ; {\theta^{(t)}})\right)^{2}\right\},
\end{aligned}
\end{equation}
where  \small $y^{(t)}\left(s^{(t+1)},r^{(t)}\right)=\Big\{r^{(t)}+\gamma\, \underset{a^{(t+1)}}{\max} Q^{\pi}\left(s^{(t+1)},a^{(t+1)}; {\bar{\theta^{(t)}}}\right)\Big\}$ \normalsize is the target Q-value generated by the target network.  Updating the DQN weights by using batches of randomly selected states from the experience replay memory rather than just the latest state breaks the temporal correlation among states, ultimately improving performance.

Deep-Q learning is well suited for discrete state and action spaces but cannot learn continuous stochastic policies. When the network has continuous state and action spaces, as is the case for our power allocation problem, the DDPG  algorithm is a suitable alternative, which can effectively deal with continuous optimization space \cite{24}. The DDPG algorithm works in an actor-critic manner: The evaluation actor network represents the parameterized policy network $\mu\left(s^{(t)}; \psi^{(t)}\right)$ with weights $\psi^{(t)}$, and the action is extracted via the deterministic policy as $a^{(t)}=\left[\mu\left(s^{(t)}; \psi^{(t)}\right)+\mathcal{W}\right]_{0}^{P_{\max }}$, 
where $\mathcal{W}$ represents the stochastic noise following a normal distribution. On the other hand, the evaluation critic network $Q^c\left(s^{(t)},a^{(t)};\phi^{(t)} \right)$ with weights $\phi^{(t)}$ represents the value-function network, which takes the current state $s^{(t)}$ and the deterministic action $a^{(t)}$ as inputs and outputs a single $Q$-value to estimate the long-term reward.  Similar  to the  DQN, target actor network $\mu\left( s^{(t)}; \bar{\psi^{(t)}}\right)$ with weights $\bar{\psi^{(t)}}$, and target critic network $Q^c\left(s^{(t)},a^{(t)};\bar{\phi^{(t)}} \right)$, with weights $\bar{\phi^{(t)}}$ are used to stabilize the learning process. The weight parameters $\phi^{(t)}$ of the critic network can be adjusted in a fashion similar to the DQN parameter
update by  minimizing the following critic loss function:
\begin{equation}\label{loss_DDPG}
 L(\phi^{(t)})=\frac{1}{\|B\|}\sum_{e\,\in\,{B}}\left\{\left(y^{(t)}\left(s^{(t+1)},r^{(t)}\right)-Q^c(s^{(t)}, a^{(t)} ; {\phi^{(t)}})\right)^{2}\right\},
\end{equation}
where \small$y^{(t)}\left(s^{(t+1)}, r^{(t)}\right)=\Big\{r^{(t)}+\gamma\,\underset{a^{(t+1)}}{\max} Q^c\left(s^{(t+1)},\mu(s^{(t+1)}; \bar{\psi^{(t)}}); \bar{\phi^{(t)}}\right)\Big\}$ \normalsize is the target critic network.

The actor network parameters $\psi^{(t)}$ can be updated by the gradient of the  critic network output w.r.t. the action $a^{(t)}$, multiplied by the gradient of the actor network  output w.r.t. its parameters  $\psi^{(t)}$, and averaged over a mini-batch  as follows:
\begin{equation} \label{actor_update}
\nabla_{\psi^{(t)}} J\left(\psi^{(t)}\right)=\frac{1}{\|B\|}\sum_{e\,\in\,{B}}\left\{\nabla_{\psi^{(t)}} \mu\left(s^{(t)}; \psi^{(t)}\right) \nabla_{a^{(t)}} Q^c\left(s^{(t)}, a^{(t)} ; \phi^{(t)}\right)\right\},
\end{equation}
where $\nabla_{\psi^{(t)}}$ denotes the gradient w.r.t. $\psi^{(t)}$ and $J\left(\psi^{(t)}\right)$ is the policy objective function to be maximized. Then, based on the stochastic gradient descent technique, the weights of the  evaluation DQN, the evaluation critic network, and the evaluation actor network can be updated as 
\begin{equation}
\begin{aligned}\label{critic_update}
&\theta^{(t)} \leftarrow \theta^{(t)}-\eta \nabla_{\theta^{(t)}} L\left(\theta^{(t)}\right), \\
&\phi^{(t)} \leftarrow \phi^{(t)}-\delta \nabla_{\phi^{(t)}} L\left(\phi^{(t)}\right), \\
&\psi^{(t)} \leftarrow \psi^{(t)}-\zeta \nabla_{\psi^{(t)}} J\left(\psi^{(t)}\right),
\end{aligned}
\end{equation}
where $\eta$, $\delta$,\hspace*{0.04cm} $\zeta$ are the learning rates of the evaluation DQN, evaluation critic network, and the evaluation actor network, respectively. The parameters of the target actor network and the target critic network are softly updated using $\bar{\psi^{(t)}} \leftarrow \tau \psi^{(t)}+(1-\tau) \bar{\psi^{(t)}},$ \hspace{0.2cm}$\bar{\phi^{(t)}} \leftarrow \tau \phi^{(t)}+(1-\tau) \bar{\phi^{(t)}}$ to slowly track the learned evaluation network parameters, where  $\tau \ll$ 1 \cite{RL_book}, \cite{mnih2015humanlevel}.

\subsection{DRL-Based Resource Allocation Framework }\label{distribted DRL}
In this section, we cast the $\cal{MMDEP}$ problem  (\ref{opt_problem}) as a sequential decision-making process by mapping the key elements from the RL framework to the $\cal{MMDEP}$ problem. Our proposed two-layered DQN and DDPG-based framework is elaborated first, followed by the derived event-triggered DRL-based algorithm in the following section. 

The  $\cal{MMDEP}$ problem inherently constitutes a hybrid discrete-continuous action space. The two-layered network structure at the RSU optimizes the discrete block length allocation policy and the continuous transmit power allocation policy simultaneously, as shown in Fig. \ref{fig:DRL}. For the first layer responsible for the discrete block length assignment, we propose a multi-DQN structure.  The key idea is to train a separate DQN for each block length allocation decision so that the total number of DQN structures equals the number of VUEs. In this way, the scale of the block length action space is reduced from  $(M_{D})^{K}$ to $K$x${M_{D}}$.
On the other hand, due to the continuous optimization space of the transmit power, we utilize the DDPG-based algorithm to assign the downlink continuous power in the second layer. This approach effectively addresses the challenge of quantization errors and the consequent performance degradation resulting from the discretization of continuous actions \cite{MetaV2X}. This two-layered structure is beneficial as it eases the implementation and enables the concurrent optimization of the two policies during the training phase for faster convergence.

\begin{figure*}[t]
 \centering
\includegraphics[width= 1\linewidth]{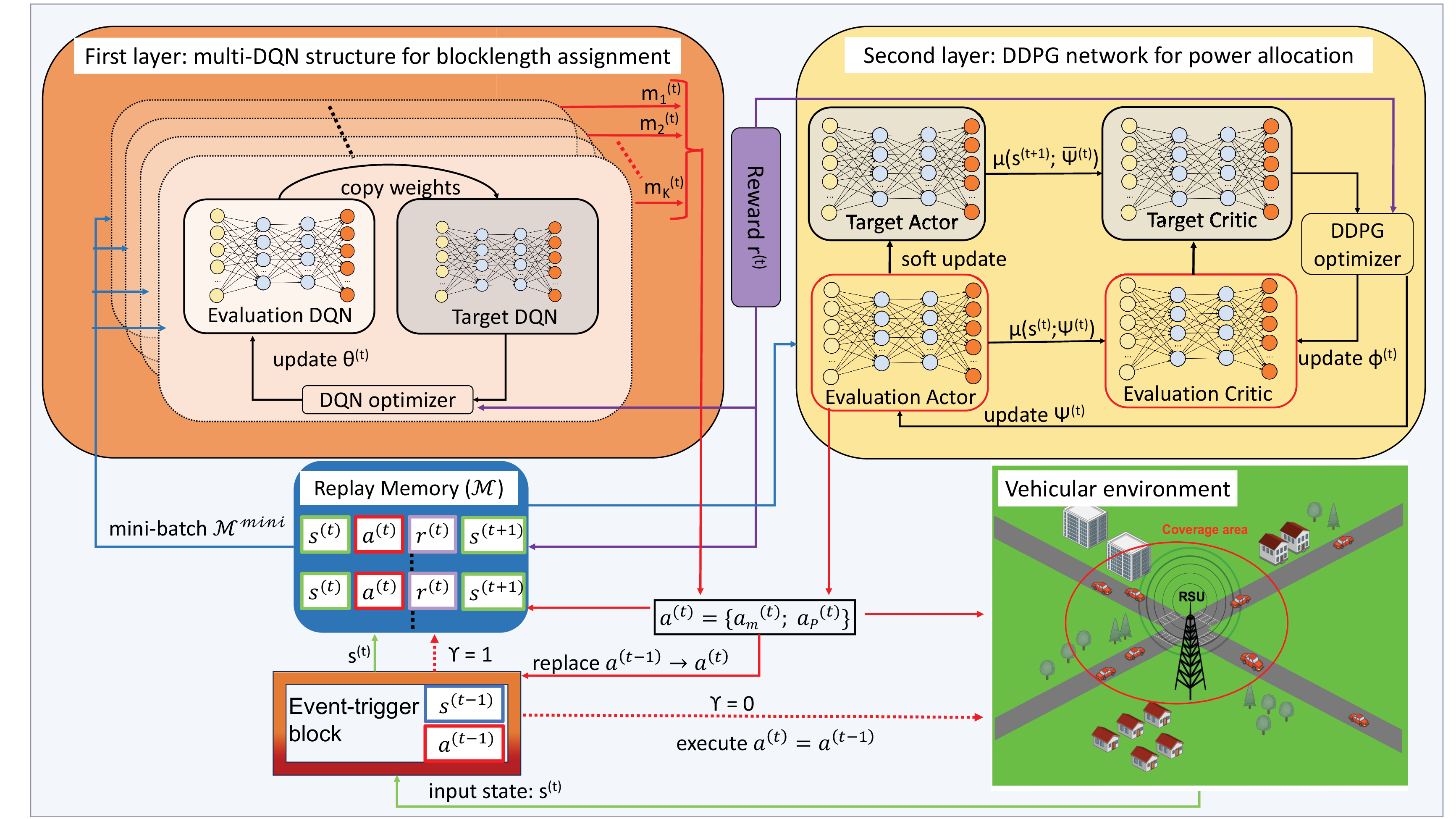}
\caption{Proposed event-triggered DRL framework for power control and block length allocation.} \label{fig:DRL}
\end{figure*}

Next, we introduce the RL framework in detail in terms of the state space, action space, and the reward function.
\begin{itemize}
\item  $\mathit{State \hspace{0.1cm}Space:}$  
Considering the channel correlation between adjacent frames, we utilize information from historical frames in addition to the instantaneous channel information, which offers essential data that can be used to optimize the resource allocation policies \cite{rr21}. 
\begin{enumerate}
\item  The  historical data comprises the downlink channel gains, $\mathbf{h}^{(t-1)}=  \left[h_{ 1}^{(t-1)}, h_{ 2}^{(t-1)}, \ldots, h_{ K}^{(t-1)}\right]$, previously allocated transmit powers, $\mathbf{P}^{(t-1)}=\left[P_{ 1}^{(t-1)}, P_{ 2}^{(t-1)}, \ldots, P_{ K}^{(t-1)}\right]$, and  previously assigned block lengths, $\mathbf{m}^{(t-1)}=\left[m_{ 1}^{(t-1)}, m_{ 2}^{(t-1)},\ldots, m_{ K}^{(t-1)}\right]$.
\item The instantaneous information includes the  channel gains, $\mathbf{h}^{(t)}=  \left[h_{ 1}^{(t)}, h_{ 2}^{(t)}, \ldots, h_{ K}^{(t)}\right]$, in the current time frame.
\end{enumerate}
\begin{equation}\label{state}
\begin{aligned}
&&{s^{(t)}}=\Bigg\{\underbrace{\mathbf{h}^{(t-1)},  \mathbf{P}^{(t-1)}\hspace{0.1cm}, \mathbf{m}^{(t-1)}}_{\text {historical information }}, \underbrace{\mathbf{h}^{(t)} }_{\text {instantaneous information }} \Bigg\}.
\end{aligned}
\end{equation}    
The cardinality of the state space is ($4 \times K$), depending on the number of VUEs.

\item $\mathit{Action\hspace{0.1cm}Space:}$ 
The DRL agent performs two actions in each time frame $(t)$: adjusting the block length, denoted as $a^{(t)}_{m}$, and setting the transmit power, denoted as  $a^{(t)}_{P}$. The combined action of the DRL agent  can be expressed as
\begin{equation}
 a^{(t)}=\left\{ a^{(t)}_{m}, a^{(t)}_{P}  \right\}= \Bigg\{m_k^{(t)}, P_k^{(t)}| \hspace{0.2cm} k = 1, \ldots, K\Bigg\}, 
\end{equation}
where the discrete value $m_k^{(t)} \in\{0,1, \ldots, M_D\}$ and the continuous value $P_k^{(t)} \in\left[0, P_{\text {max }}\right]$ represent the block length and transmit power allocated  to the $k^{th}$ VUE at time frame $t$, respectively.
\item  $\mathit{Reward\hspace{0.1cm}function:}$ The flexibility of RL reward design makes it particularly appealing for solving problems with difficult-to-optimize objectives.
To achieve the global objective (\ref{obj}), while considering the stringent latency and reliability requirements, we  design a reward function, which is expressed as

\begin{equation}
\label{reward}
\begin{aligned}
r^{(t)} = \Bigg\{ 
&-\alpha_1 \max_{k \in K}\hspace*{0.1cm} \left(\varepsilon_{1}^{(t)}, \ldots, \varepsilon_{k}^{(t)}, \ldots, \varepsilon_{K}^{(t)}\right) \\
&-\alpha_2 \max \left(\sum_{k \in \mathcal{K}}m_{k}^{(t)}P_{k}^{(t)}-P_{max}M_{D}, 0\right) \\
&-\alpha_3 \sum_{k \in \mathcal{K}}\left(\max \left(\epsilon_{k}^{(t)}-\epsilon_{th}, 0\right)\right) \\
&- \alpha_4 \max \left(\sum_{k \in \mathcal{K}} m_{k}^{(t)}-M_{D}, 0\right) \hspace{0.1cm} \Bigg\},
\end{aligned}
\end{equation}


where $\alpha_{i}, i \in\{1,2,3,4\}$, are the positive weights to balance the utility (obtained from the objective function) and the cost in terms of constraint violation. To align the reward function with the desired objective, we include the first term in Eq.(\ref{reward})  as the negative reward to be consistent with the objective in our system model. Additionally, the unsatisfied constraints of transmit power, reliability, and latency are incorporated into the reward as penalties. The weight coefficients are carefully selected as they impact the learning efficiency and convergence of the algorithm. In practice, the coefficients $\alpha_{i}$ are hyperparameters and need to be tuned
empirically. In our training, we set $\alpha_{1} = \alpha_{3} =1$,  $\alpha_{2} = \frac{1}{(P_{max}M_{D})}$ and $\alpha_{4} = \frac{1}{M_{D}}$. This hyperparameter selection approach balances the contribution of transmit power, reliability, and latency constraints to the desired objective, and enables stable updates of the  Q-values for improved convergence.  
\end{itemize}
\subsection{Event-triggered learning}\label{ETL}
In general, RL is studied by means of Markov decision processes (MDPs), where learning is performed in a time-triggered mechanism. The periodic training for learning may be unnecessary when the environmental fluctuation is negligible. Besides, training requires high-performance computing to reduce time costs, while the required hardware is expensive and power-hungry. Inspired by event-triggered control systems \cite{ET, ETO}, which activate computation or communications only when the system state deviates from the expected accuracy, we incorporate event-triggered learning into our proposed DRL framework to reduce computational cost and improve learning efficiency.

According to (\ref{jake}), for low Doppler frequency, the input states of the training agent, which includes the channel gains in consecutive time frames, are very similar due to the temporal correlation between the current and past channel conditions. This implies that the resource allocation decisions are likely to be the same in consecutive time frames.  As shown in Fig. \ref{fig:DRL}, based on the temporal correlation between the states of the agent, we introduce an event-trigger block before the initiation of the DRL process, which checks whether to initiate the learning process for new action selection or use the former action for resource allocation. The event-triggering block holds the previous state-action pair $\left\{s^{(t-1)},a^{(t-1)}\right\}$, and compares the former state of the agent against the current observed state. If the absolute difference is below the event-triggering threshold, then the former action is executed without initiating the learning process, thus reducing the use of computational resources over time frame $t$. Specifically, the triggering criterion is defined as follows:

\begin{equation} \label{ETL_EQ}
 \Upsilon  = \begin{cases}1, & \text { if }\left\|s^{(t)}-s^{(t-1)}\right\| \geq \upsilon\\ 0, & \text { otherwise }\end{cases}   
\end{equation}
where $\upsilon  \geq 0$ is the trigger threshold, $\Upsilon  =1$ means initiating the DRL process, and $\Upsilon  =0 $ means using former state-action pair. 
\vspace{-0.1cm}
\subsection{Proposed DRL-based Resource Allocation Algorithm}\label{proposed DRL}
In this section, we present our proposed event-triggered DRL algorithm for solving the  $\cal{MMDEP}$  problem. The proposed event-trigger learning and DRL-based algorithm for the joint block length assignment and transmit power allocation is denoted by Event-triggered DDPG-DQN, and is summarized in Algorithm \ref{DDPG_algo}, which includes three phases, i.e., initialization, random experience collection,  and event-triggered training and execution.

\begin{algorithm} 
\SetAlgoLined
\caption{ Proposed Event-triggered DDPG-DQN Algorithm } \label{DDPG_algo}
\nonl \textbf{Initialization:}\ \\
\textnormal{ Initialize the replay memory block, $\cal{M}$, and the event-trigger block with $s^{(t-1)}=0,\hspace{0.1cm} a^{(t-1)}=0$}\;
\textnormal{ Randomly initialize  $\theta^{(t)}$, $\psi^{(t)}$, $\phi^{(t)}$}\;
\textnormal{ Initialize the target network weight parameters $\bar{\theta^{(t)}} \leftarrow \theta^{(t)} , \bar{\psi^{(t)}} \leftarrow \psi^{(t)}, \bar{\phi^{(t)}} \leftarrow \phi^{(t)}$}\;
\nonl \textbf{Random experience collection:}\ \\
\For{$t=0,1, \ldots T_{rand}$ }{
    Execute the random joint action $a^{(t)}$\;
    Obtain the reward using (\ref{reward})\;
    Observe the new state $s^{(t+1)}$\;
    Store experience $e^{(t)}=\left(s^{(t)},a^{(t)},r^{(t)},s^{\left(t+1\right)}\right)$ in $\mathcal{M}$\;
\If{$t > B$ }{
\textnormal{Sample $B$ experiences  from  $\mathcal{M}$}\;
\textnormal{Update   the weights of evaluation networks  $\theta^{(t)}$, $\psi^{(t)}$ and $\phi^{(t)}$ with  (\ref{critic_update})}\;
}}
\nonl \textbf{Event-triggered training:}\ \\
\For{$t=T_{rand}, T_{rand}+ 1, \ldots \mathcal{T}_{train}$ }{
\textnormal{ Compute event-triggering condition $\Upsilon$ according to (\ref{ETL_EQ})}\;
 \If{$\Upsilon = 1$ }{
    \textnormal{Choose the block length action $a_{m}^{(t)}$ following the $\epsilon-$greedy policy}\;
    \textnormal{Generate the deterministic power action $ a_{P}^{(t)}=\left[\mu\left(s^{(t)} ; \psi^{(t)}\right)+\mathcal{W}\right]_{0}^{P_{\max }}$}\;
    \textnormal{$a^{(t)}=\left\{ a^{(t)}_{m}, a^{(t)}_{P}  \right\}$ \;}}
 \Else{Execute the previous  action $a^{(t-1)}$\;}
\textnormal{ Evaluate the reward using (\ref{reward}),  next state $s^{(t+1)}$ and store experience $e^{(t)}=\left(s^{(t)},a^{(t)},r^{(t)},s^{\left(t+1\right)}\right)$  in $\cal{M}$}\;
\textnormal{ Randomly sample $B$ experiences  from  $\mathcal{M}$\;}
\textnormal{ Update   the weights of evaluation networks  $\theta^{(t)}$, $\psi^{(t)}$ and $\phi^{(t)}$ with  (\ref{critic_update})}\;
}
\end{algorithm}

In the \textit{initialization} phase, $K$ evaluation DQNs $Q\left(s^{(t)},a^{(t)};\theta^{(t)}\right)$, and $K$ corresponding target DQNs $Q\left(s^{(t)},a^{(t)};\bar{\theta^{(t)}}\right)$ are constructed as a first layer at the RSU, meanwhile evaluation actor network $\mu\left( s^{(t)}; \psi^{(t)}\right)$, target actor network $\mu\left( s^{(t)}; \bar{\psi^{(t)}}\right)$,  evaluation critic network $Q^c\left(s^{(t)},a^{(t)};\phi^{(t)} \right)$, and the corresponding target critic network $Q^{c}\left(s^{(t)},a^{(t)};\bar{\phi^{(t)}} \right)$ are established as a second layer at the RSU. The weights $\theta^{(t)}, \psi^{(t)}, \phi^{(t)} $ of the evaluation networks are randomly initialized, whereas the target network weights $\bar{\theta^{(t)}}, \bar{\psi^{(t)}}, \bar{\phi^{(t)}}$  are initialized with the weights of the corresponding evaluation network weights, i.e., $\theta^{(t)}=\bar{\theta^{(t)}}$, $\psi^{(t)}=\bar{\psi^{(t)}}$, and $\phi^{(t)}=\bar{\phi^{(t)}}$, respectively (Lines $1-3$).

In the \textit{random experience collection},  we let the agent do the random walk for $T_{rand}$ time frames, which induces changes in the channel conditions and consequently allows the policy to observe
more variant states during its training. This procedure intuitively increases the algorithm's robustness to the changes in channel conditions. At the beginning of each time frame, the RSU receives the channel
gains from all VUEs to establish the current state $s^{(t)}$. The DRL agent selects a random transmit power and block length action, obtains the reward using (\ref{reward}), and transitions to the next state $s^{(t+1)}$ due to the channel variation in the next time frame. Meanwhile, the experience   $e^{(t)}=\left(s^{(t)},a^{(t)},r^{(t)},s^{\left(t+1\right)}\right)$ is stored in the experience  replay memory $\cal{M}$ (Lines $5-8$).  Note that the replay buffer memory capacity $\mathcal{M}$  is much larger than the batch size $B$, and RSU begins to sample from $\mathcal{M}$ when at least $B$ sample experiences are available. A mini-batch of experiences of size $B$ from  $\mathcal{M}$ is used to train the evaluation DQNs, the actor network, and the critic network, i.e., to minimize the DQN loss in (\ref{loss_DQN}) and the critic loss function in (\ref{loss_DDPG}). Afterwards, the agent updates the actor policy by the sampled policy gradient with (\ref{actor_update}), the evaluation network parameters with (\ref{critic_update}), and soft updates the target network parameters with those of the evaluation networks (Lines $10-11$).

In the \textit{event-triggered training} phase $\left( t \geq T_{rand}\right)$, at the beginning of time frame $t$, the RSU receives the channel gains from all VUEs. The current state of the agent  $s^{(t)}$ is passed to the trigger block to compute (\ref{ETL_EQ}) (Line $15$). The DRL process is initiated if  $\Upsilon =1$. The current state is sent to the multi-DQN structure (first layer) and the DDPG-based network (second layer) for block length and transmit power estimation, respectively. Specifically, the DRL agent outputs the block length allocation action $a^{(t)}_{m}$ using $K$ separate DQNs while following the $\epsilon-$greedy policy, and the deterministic power action $ a_{P}^{(t)}=\left[\mu\left(s^{(t)}; \psi^{(t)}\right)+\mathcal{W}\right]_{0}^{P_{\max }}$ using the actor network (Lines $16-18$).  Otherwise, the agent executes the previous action stored in the event-trigger block (Lines $19-22$). The agent obtains the reward using (\ref{reward}), the next state $s^{(t+1)}$, and stores the current experience tuple  $e^{(t)}$ in the replay buffer memory $\cal{M}$ (Line $24$). Then, by sampling a mini-batch of $B$ experiences, the agent trains the evaluation DQNs, the actor network, and the critic network in a manner similar to that described in the random experience collection phase  (Lines $25-26$).
Finally, upon convergence, the algorithm outputs the trained parameters; $\widetilde{\theta}^{(t)}$, $\widetilde{\psi}^{(t)}$, $\widetilde{\phi}^{(t)}$ for the evaluation DQNs,  the evaluation actor, and the evaluation critic network, respectively.

The major computational complexity of  Algorithm \ref{DDPG_algo} comes from training  $K$  evaluation DQNs,  and four DNNs in the actor-critic network structure. Assuming that the evaluation actor network, the critic network, and each DQN uses  $J$, $N$, and $L$ fully connected layers, respectively, the computational complexity can be calculated using
$O\Big(\sum_{j=0}^{J-1} u_j u_{j+1}+ \sum_{n=0}^{N-1} u_n u_{n+1} + \sum_{l=0}^{L-1} u_l u_{l+1} \Big),$
where $u_i$ denotes the number of units in $i$-th fully connected layer, and $u_0$ indicates the input size.


\section{Performance Evaluation} \label{sec:simulation}
In this section, we evaluate the performance of our proposed event-triggered DDPG-DQN based approach in comparison to the three state-of-the-art benchmark algorithms, namely, our proposed joint optimization algorithm ($\cal{JOA}$), the single-DQN scheme, the DDPG-DQN  scheme, and the random allocation algorithm.
The single-DQN scheme considers a single DQN for the joint transmit power and block length allocation by quantizing the power into ten discrete levels. This scheme is only available for discrete action spaces due to the inherent limitation of the DQN and is based on the adaptation of the joint DRL-based methodology proposed in \cite{joint_learning}. The  DDPG-DQN  scheme optimizes the continuous transmit power and block length in each time frame by utilizing the proposed DRL-based framework without the event-triggered learning mechanism. Finally, the random allocation algorithm chooses a random transmit power and block length action at the beginning of every time frame. 

To compare the performance of different learning schemes, the results are averaged over ten independent simulation runs with random initialization. The simulation setup constitutes two main phases, i.e., the training phase and the testing phase. The training phase lasts $ \mathcal{T}_{train}=20,000$ time frames, whereas the subsequent $5,000$ time frames are dedicated to the testing phase, where the well-trained DNNs are utilized to evaluate the performance of the different learning algorithms. To obtain the figures,  a second-order Savitzky-Golay smoothing filter with a window size of $503$ is utilized for smoothness. 
All simulations are performed on a $10$-Core Intel(R) Xenon(R) Silver $4114$, $2.2 \mathrm{GHz}$ system equipped with an Nvidia Quadro P2000 graphics processing unit (GPU). 

\subsection{Simulation Setup} \label{sec:Sim_setup}

\begin{table}[!t]
 \caption{\textsc{Simulation parameters}\label{table_sim}}
     \centering
     \scalebox{0.75}{
     \begin{tabular}{|c|c|}
\hline   \textbf{Parameter}  &  \textbf{Value}  \\
\hline  Maximum  transmit power  $P_{max }$ & 23 {dBm} \\
\hline  Noise power  $\sigma^{2}$ & -114 dBm \\
\hline   Vehicle receiver noise figure  & 9 dB \\
\hline  Vehicle speed $v_s$  & 60\hspace{0.05cm}km/hr \\
\hline  Vehicle placement model  &   spatial Poisson   \\
\hline   Lane width  & 4 {m} \\
\hline  { Carrier frequency $f_{c}$} & 2 {GHz} \\
\hline  { Shadowing standard deviation $\sigma_{sh}$} & 3 {dB} \\ 
\hline  { Radius } & 500 {m} \\
\hline  { RSU antenna height } & 25 {m} \\
\hline  { RSU antenna gain } & 8 {dBi} \\
\hline  { RSU receiver noise figure } & 5 {dB} \\
\hline  { Distance between RSU and highway } & 35 {m} \\
\hline  { Vehicle antenna height } & 1.5 {m} \\
\hline { Vehicle antenna gain } & 3 {dBi} \\
\hline { Payload size $L$ } &  100 {bits} \\
\hline SNR threshold $\gamma_{th}$ & 0  {dB}\\
\hline
\end{tabular}}
\end{table}

\begin{table}[t]
 \caption{\textsc{Hyperparameters for actor network} \label{table_actor}}
\centering
\small
\scalebox{0.74}{
     \begin{tabular}{|c|c|c|c|c|c|}
\hline Layers & Input & $\mathit{L_1^{(a)}}$ & $\mathit{L_2^{(a)}}$ & $\mathit{L_3^{(a)}}$ & Output \\
\hline Neuron number & ($4K$) & 500 & 250 & 120 & 1 \\
\hline Activation function & Linear & Relu & tanh & Relu & Sigmoid \\
\hline Action noise $\mathcal{W}$  & \multicolumn{5}{c|}{Standard Gaussian distribution $\mathcal{W}$(0,0.1) }\\
\hline Mini-batch size $B$ & \multicolumn{5}{c|}{128} \\
\hline Random experience collection period $T_{\text {rand }}$ & \multicolumn{5}{c|}{$3 \times B$} \\
\hline Replay buffer memory capacity & \multicolumn{5}{c|}{500} \\
\hline
\end{tabular}}
\end{table}

We investigate the performance of our proposed algorithm against several schemes by simulating a vehicular network with multiple VUEs. In particular,  we follow the simulation setup for a freeway model described in 3GPP TR 36.885, which describes in
detail the vehicular channel models, vehicle mobility, and vehicular data traffic \cite{r16}. In particular,  $N$  V2I links are initiated by $N$ vehicles and the K V2V
links are formed between each vehicle with its surrounding
neighbors. The vehicles are distributed according to  Poisson distribution on a multi-lane highway with the BS at the center and at a fixed distance from the highway. The considered highway is a $3\times2$ lane with a fixed lane width.  The vehicle density on the highway is determined by the vehicle speed $v_s$, and the average inter-vehicular distance is $2.5\hspace{0.001cm} \times \hspace{0.01cm}v_s$,  which is considered fixed in our simulation results. The large-scale fading is modeled using the WINNER path-loss model $PL(d)=128.1 + 37.6 \log _{10}(d)$, 
where $d$ is the distance  in $[\mathrm{km}]$ of the VUE from the RSU \cite{r19}. Fast fading has been modeled using Rayleigh fading with the time correlation between successive time frames, as represented by Jake’s model. The important simulation parameters are provided in Table \ref{table_sim}.

\begin{table*}[t]
 \caption{\textsc{HYPERPARAMETERS FOR CRITIC NETWORK AND DEEP-Q NETWORK}  }
 \label{table_critic}
\small
\centering
\scalebox{0.75}{
     \begin{tabular}{|c|c|c|c|c|c|c|c|c|c|c|c|}
\hline  DNN type& \multicolumn{5}{c|}{Critic network} & \multicolumn{5}{c|}{DQN} \\
\hline Layers & Input & $\mathit{L_1^{(c)}}$ & $\mathit{L_2^{(c)}}$ & $\mathit{L_3^{(c)}}$ & Output & Input& $\mathit{L_1^{(DQN)}}$ & $\mathit{L_2^{(DQN)}}$ & $\mathit{L_3^{(DQN)}}$ & Output \\
\hline Neuron number & ($4K$)+1 & 500 & 250 & 120 & 1  & ($4K$) & 500 & 250 & 120 & $M_{D}$ \\
\hline Activation function & Linear & Relu & tanh & Relu & Linear & Linear & Relu & tanh & Relu & Linear \\
\hline Optimizer  & \multicolumn{10}{c|}{RMSPropOptimizer} \\
\hline Mini-batch size $B$ & \multicolumn{10}{c|}{128} \\
\hline Replay buffer memory capacity  & \multicolumn{10}{c|}{500} \\
\hline Discount factor $\gamma$  & \multicolumn{10}{c|}{0.5}  \\
\hline Initial learning rate $\delta^{(0)}$   & \multicolumn{10}{c|}{0.05} \\
\hline Learning decay rate $\alpha_L$   & \multicolumn{10}{c|}{0.0001}  \\
\hline Initial exploration rate $\epsilon^{(0)}$   & \multicolumn{10}{c|}{0.1} \\
\hline $\epsilon$-decay rate $\alpha_\epsilon$  & \multicolumn{10}{c|}{0.0001}  \\
\hline
\end{tabular}}
\end{table*}

The hyperparameter values for the actor DNN, the critic DNN, and the DQN are tabulated in Table \ref{table_actor} and  \ref{table_critic}, respectively. The tabulated parameters are fine-tuned to reach the desired results. The DNNs used to implement the actor network, the critic network, and the DQN have a similar architecture with one input layer, three hidden layers, and one output layer. The rectifier linear unit (ReLu) and hyperbolic tangent (tanh) functions are used alternatively as the activation function since this scheme leads to faster convergence, as per our observation. The input states are normalized and pre-processed to optimize the performance. In particular,  logarithmic transformation is applied to the input states for event-triggered control and data processing before feeding them to the DNNs. Logarithmic representation is preferred since channel amplitudes often vary by order of magnitude. The output of the actor network is scaled to be between 0 and $P_{max}$. To encourage exploration during training, Gaussian action noise $\mathcal{W}$ with mean 0 and standard deviation of 0.05  is added to the actor network output. The replay buffer capacity and the mini-batch size are set to 500 and 128, respectively.   The $\epsilon$-greedy algorithm is initialized with $\epsilon^{(0)}=$ 0.1 and the level of exploration is adaptively reduced according to $\epsilon^{(t+1)}=\max \left\{0,\left(1-\alpha_\epsilon\right) \epsilon^{(t)}\right\}$, where $\alpha_\epsilon=$0.0001 is the $\epsilon$-decay rate. 

\subsection{Analysis of Event-triggered learning} \label{ANALYSIS}
\begin{figure}
     \centering
     \begin{subfigure}[t]{0.4\textwidth}
         \centering
         \includegraphics[width=\textwidth]{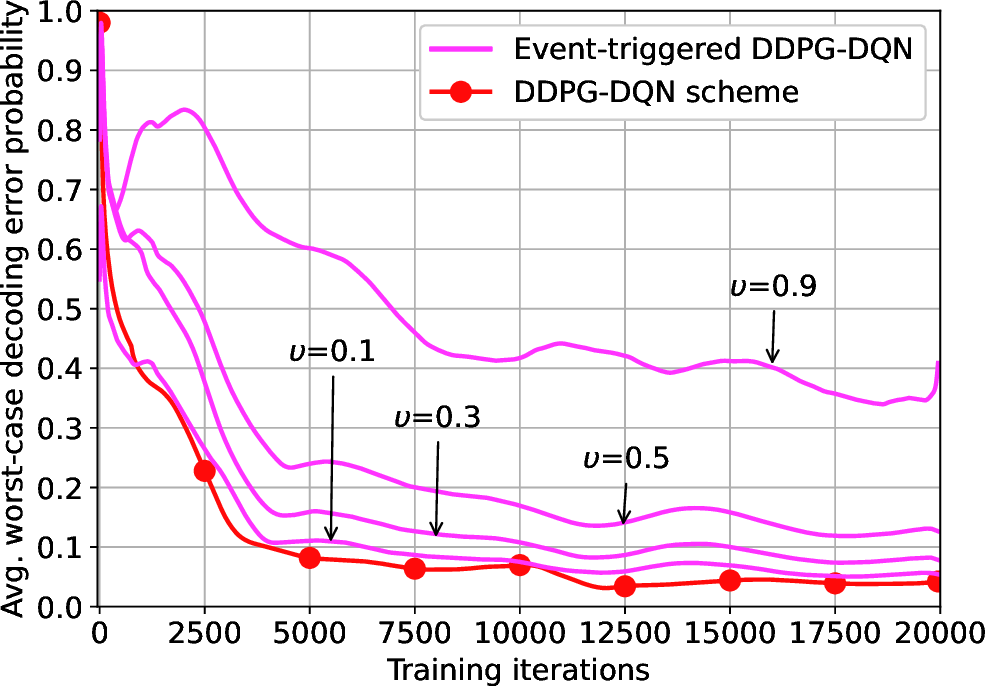}
         \caption{}
         \label{N10a}
     \end{subfigure}
     \begin{subfigure}[t]{0.4\textwidth}
         \centering
         \includegraphics[width=\textwidth]{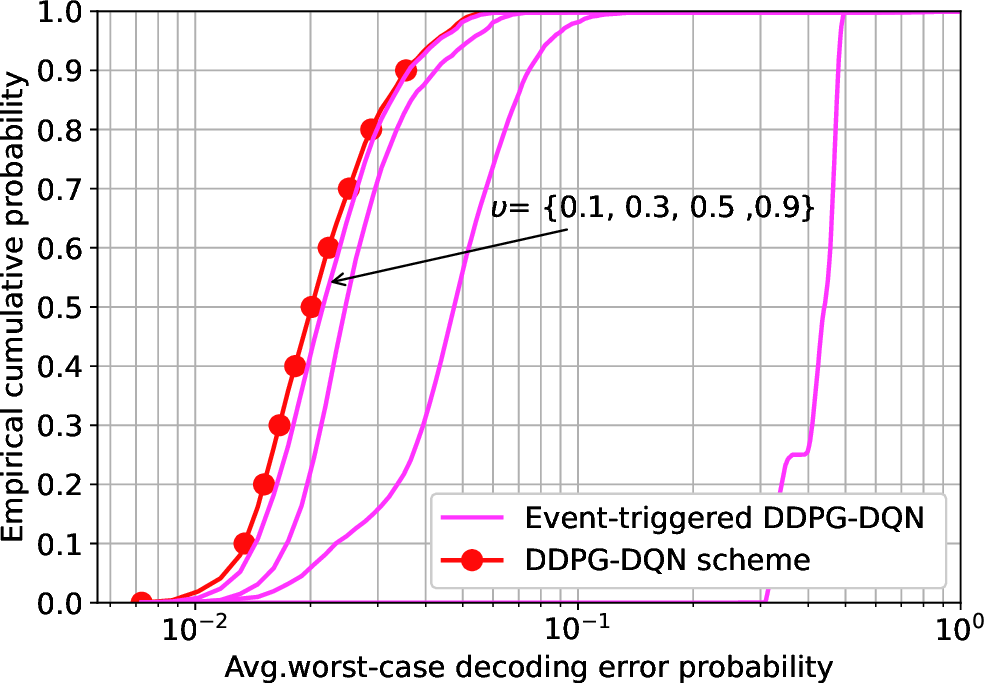}
         \caption{}
         \label{N10b}
      \end{subfigure}
      \begin{subfigure}[t]{0.4\textwidth}
        \centering
        \includegraphics[width=\textwidth]{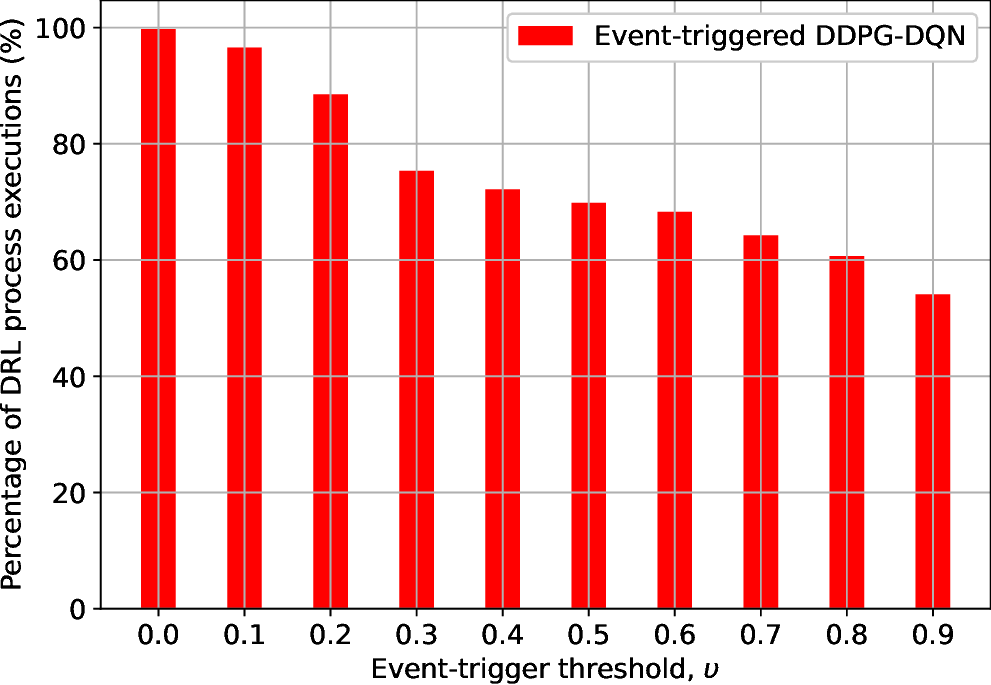}
        \caption{}
        \label{bar}
      \end{subfigure}
     \caption{Average worst-case decoding-error probability performance of the algorithms, where (a) shows the training performance, (b) shows the testing performance for different event-trigger threshold values $\upsilon =\left\{0.1, 0.3, 0.5, 0.9 \right\}$, and (c) shows the percentage of DRL executions for varying trigger threshold values, with $K=20$, $P_{max}=23$ dBm, and $M_{D}=300$ symbols.}
     \label{N10}
     \vspace{-0.87cm}
\end{figure}

We investigate the impact of changing the trigger threshold value $\upsilon$  on the training convergence and number of DRL process executions of the proposed event-triggered DDPG-DQN scheme. Note that the input states are normalized and scaled before evaluating the trigger condition.

Fig. \ref{N10} demonstrates the impact of different event-trigger threshold values on the training convergence and complexity of the proposed algorithm in terms of DRL process executions. As shown in Fig. \ref{N10a} and \ref{N10b}, for low values of threshold, i.e., $\upsilon = 0.1$, the reliability performance of the proposed event-triggered DDPG-DQN scheme is very close to that of the  DDPG-DQN scheme and the average performance gap between the two schemes is less than  $\sim 1\%$. On the other hand, for very high threshold values, i.e., $\upsilon = 0.9$, the reliability performance significantly deteriorates in comparison to the  DDPG-DQN scheme, and the event-triggered DDPG-DQN scheme achieves only  $\sim 11.12\%$ of the reliability performance provided by the  DDPG-DQN scheme. This performance degradation can be explained by the agent's reduced exposure to useful information critical for optimizing its action. Fig. \ref{bar} illustrates the percentage of DRL executions for varying trigger threshold values in the training phase. Increasing the threshold value results in the reduction of the number of DRL process executions. For instance, with $\upsilon = 0.9$, the DRL agent only trains the network for  $\sim 50\%$ of the total training duration but provides poor reliability performance. Combining the results from  Fig. \ref{N10a}, \ref{N10b}, and \ref{bar}, we can observe an interesting trade-off between the reliability performance and computation complexity in terms of the percentage of DRL executions.   A moderate value of the threshold, e.g., $\upsilon=0.3$, can achieve performance close to the DDPG-DQN scheme with an average performance gap less than $\sim 3\%$  while reducing the percentage of DRL executions by up to 24$\%$.  Therefore, for the subsequent simulation results, we fix the value of the event-trigger threshold to $\upsilon =$ 0.3. Unless otherwise stated, the default parameters are set as follows: $P_{max}=$  23 {dBm}, $M_{D}=$ 300, and $\upsilon =$ 0.3.

    

\subsection{Performance Comparison of Algorithms}
Fig. \ref{N5a} illustrates the training convergence behavior of the different algorithms. At the start of data transmissions, the average worst-case decoding error probability is identical for the proposed event-triggered DDPG-DQN algorithm and the random resource allocation algorithm since the proposed algorithm randomly assigns the power and block length to each VUE to collect experiences. From the figure, both the proposed event-triggered DDPG-DQN  and the  DDPG-DQN schemes converge to around $95\%$ of the average worst-case decoding-error probability obtained using the joint optimization method. The average worst-case decoding error probability using the proposed algorithm decreases rapidly compared to the single-DQN scheme and converges after around 13,000 time frames. In contrast, the single-DQN scheme exhibits slow and poor convergence.  This is mainly due to the high complexity of the action space of the single-DQN scheme, where a single DQN is used to determine the power and block length allocation actions for all VUEs. Fig. \ref{N5b} demonstrates the effectiveness of the proposed algorithm by empirically validating the performance of the different algorithms in the testing stage. The proposed algorithm outperforms the single-DQN scheme and achieves on average 21$\%$ higher reliability compared to the single-DQN scheme on the testing data. This means that by exploiting the well-trained DNNs, the proposed algorithm can effectively learn the patterns of the environment and provide close to optimal results on the environmental states that did not appear before.

\begin{figure}[t]\label{N5}
     \centering
    \begin{subfigure}[t]{0.4\textwidth}
         \centering
    \includegraphics[width=\textwidth]{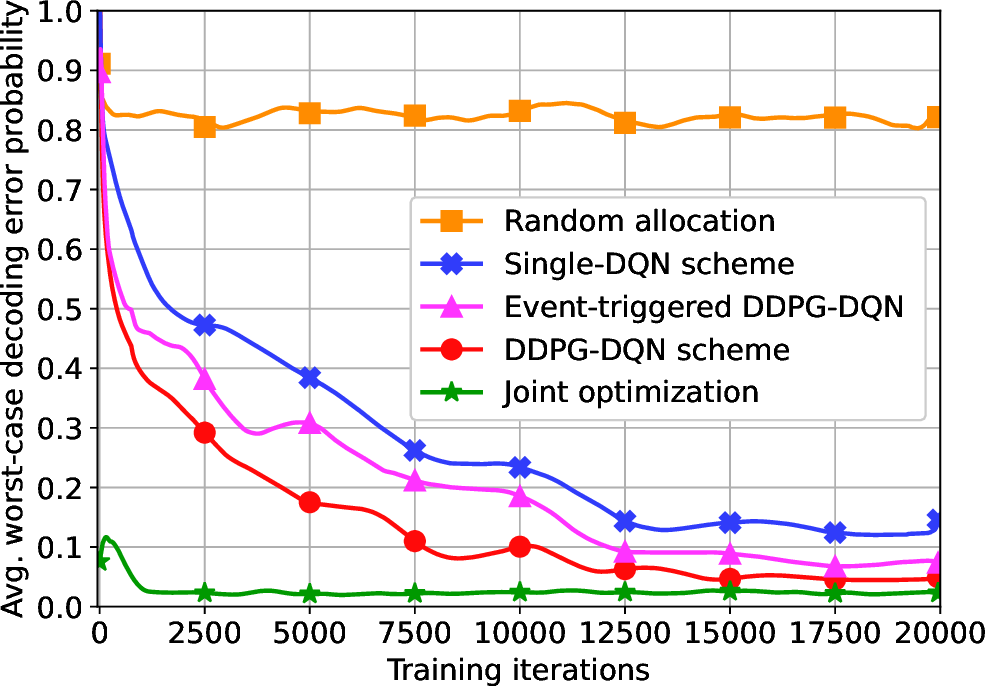}
         \caption{Training}
         \label{N5a}
\end{subfigure}
\begin{subfigure}[t]{0.4\textwidth}
         \centering
    \includegraphics[width=\textwidth]{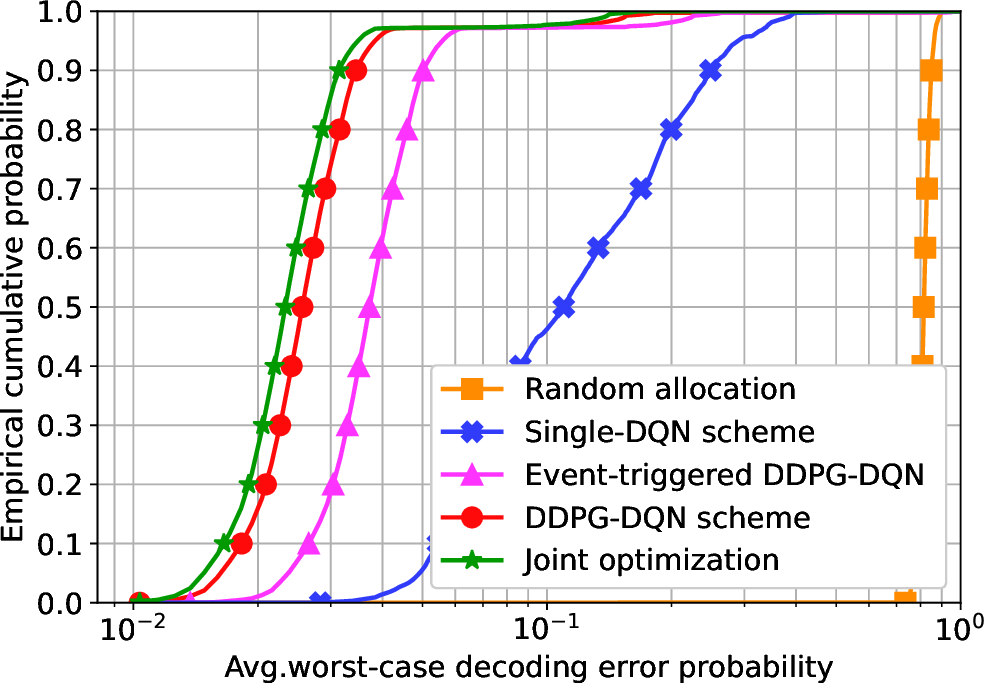}
         \caption{Testing- Empirical CDF}
         \label{N5b}
\end{subfigure}
\caption{Average worst-case decoding-error probability performance of the algorithms.} 
\end{figure}

Fig. \ref{symbols_trend}  illustrates the impact of maximum available symbols $M_{D}$ on the average worst-case decoding-error probability. The error probability decreases with the increase in total available block length as each VUE gets a larger percentage of the total available block length resource, leading to better reliability performance. The event-triggered DDPG-DQN scheme remarkably achieves $\sim 95 \%$ of the reliability performance of the joint optimization scheme and $\sim 98 \%$  of the reliability performance offered by the  DDPG-DQN algorithm, while outperforming the single-DQN for  $K=$ 5 and $K=$ 10 VUEs. The decoding-error probability increases with the number of VUE pairs $K$ since more VUEs are competing for the fixed available block length resource.

Fig. \ref{power_trend}   shows the effect of maximum transmit power on the average worst-case decoding-error probability. Higher transmit power leads to better
reliability performance, as expected. The decoding-error probability increases with the number of  VUEs due to the limited power resource and degradation of the available SNR with high road traffic density. The proposed event-triggered DDPG-DQN algorithm performs close to the joint optimization scheme, and the performance gap between the event-triggered DDPG-DQN and the  DDPG-DQN scheme is less than 2$\%$ for different values of  $P_{max}$. On the other hand,  the single-DQN scheme cannot keep up with the proposed approach in terms of reliability performance. This is mainly due to the large action space of single-DQN compared to the proposed approach, which utilizes a multi-DQN structure.  Besides, the quantization of power into discrete levels using the single-DQN approach discards certain pertinent information that may be crucial in determining the most suitable option for power allocation.

Fig. \ref{vehicle_trend}   provides the average reliability performance of the algorithms for different numbers of vehicles. Clearly, an increase in the number of VUEs results in a degradation of the reliability performance of all the algorithms since more vehicles are competing for the same limited resources. The random allocation strategy cannot satisfy the reliability requirement at all due to the random allocation of power and block length.   The single-DQN scheme exhibits a degraded performance for all considered vehicular densities because as the vehicular network size scales up, the network topology as well as the number of possible actions increase rapidly, making it very challenging to explore the whole action space to find out the optimal action choice. On the other hand, the proposed event-triggered DDPG-DQN achieves on average 95$\%$ of the reliability performance of the joint optimization solution, and the performance gap between the event-triggered DRL scheme and the DDPG-DQN scheme is below 5$\%$  for different vehicular densities, demonstrating the robustness of the proposed algorithm to network size.

\begin{figure}[t]
\includegraphics[width=0.4\textwidth]{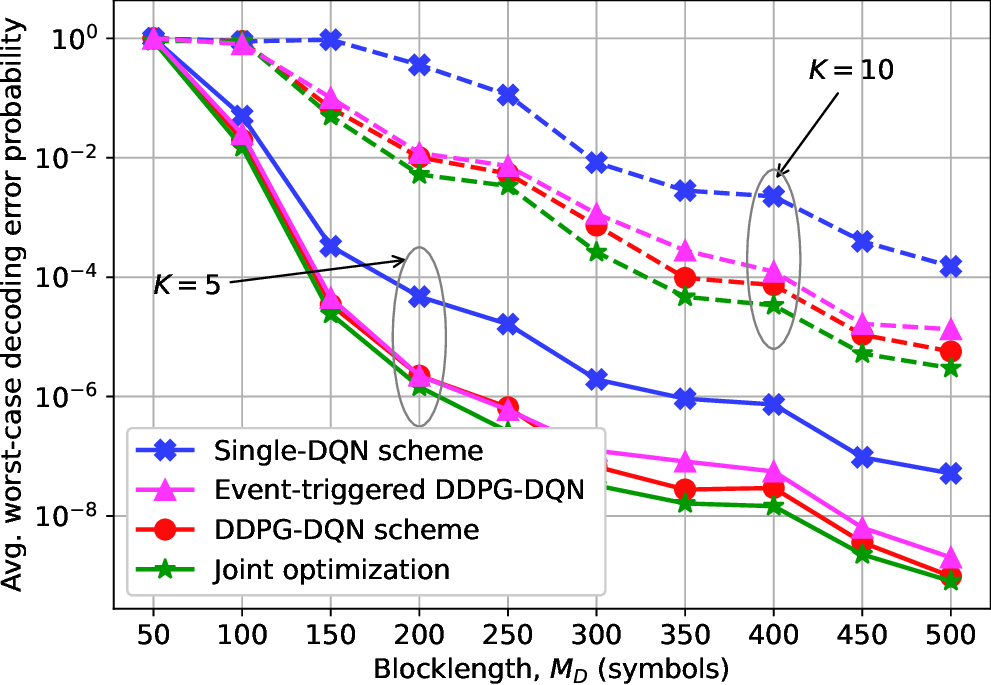}
\caption{ Optimized average  worst-case decoding-error probability of the algorithms
versus the number of symbols $M_{D}$.}
\label{symbols_trend}
\end{figure}

\begin{figure}[t]
\includegraphics[width=0.4\textwidth]{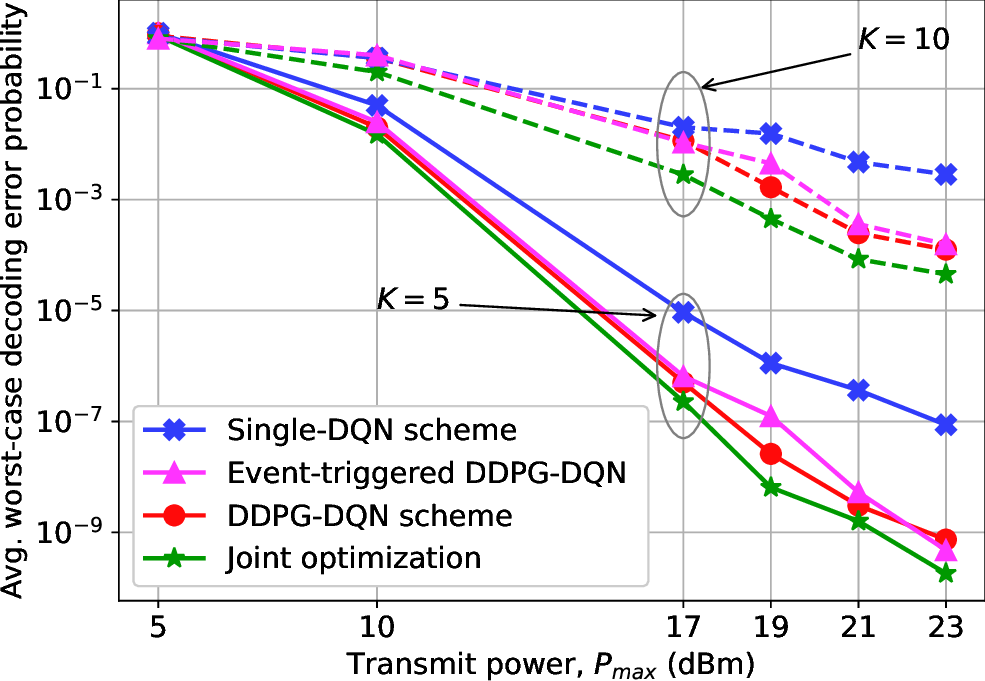}
\caption{ Optimized average worst-case decoding-error probability of the algorithms
versus the maximum transmit power $P_{max}$.}
\label{power_trend}
\end{figure}

Fig. \ref{run-time} illustrates the average run-time of the algorithms for different network sizes. As the network size increases, the run-time complexity increases
significantly for the joint optimization algorithm due to its associated second-order polynomial growth rate in $K$, which limits its scalability. In contrast, once trained well, the DRL agent on average can determine its power and block length assignment action using the proposed algorithm in less than \SI{0.8}{\milli\second}. Another observation is that the single-DQN scheme has the lowest run-time since a single trained network executes the two actions. However, the reduced run-time for the single-DQN scheme comes at the cost of poor convergence and degraded reliability performance as the network size increases. The proposed event-triggered DDPG-DQN scheme and the DDPG-DQN scheme utilize the same network architecture and have almost the same run-time for allocating resources for different vehicular densities. It should be pointed out that the proposed event-triggered DDPG-DQN scheme is trained using $\sim 75\%$ of total training executions. Table (\ref{table_time})  shows the average time complexity of various algorithms computed across training steps and different network sizes (K=\{15, 20, 25, 30, 35, 40\} vehicles). From the table, the average time to train the proposed two layers (multiple DQNs and the actor-critic network) is around \SI{11.2} {\milli\second}, which is significantly lower than that of DDPG-DQN, and the single-DQN schemes. Additionally, the average time required for calculating transmit power and block length allocation with the proposed solution is around \SI{0.8} {\milli\second}, significantly less than the average calculation time of approximately \SI{93} {\milli\second} needed for employing the joint optimization algorithm. Hence, the proposed framework can allocate the radio resources within the millisecond-level transmission delay requirement of real-time URLLC V2X applications. Since the computational capability of RSUs in real-world V2X networks exceeds that of the computers utilized in simulations, the average time required for training the evaluation networks and determining transmit power and block length solutions can be further reduced in practical networks.

\begin{figure}[!t]
\includegraphics[width=0.4\textwidth]{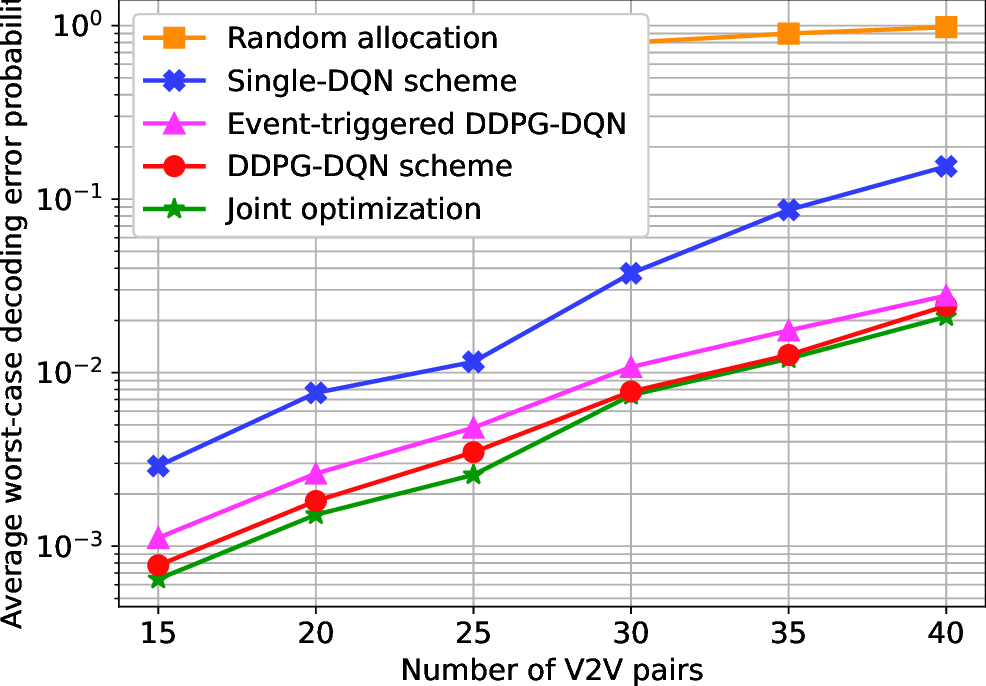}
\caption{ Optimized average worst-case decoding-error probability of the algorithms
versus the number of VUEs $K$.}
\label{vehicle_trend}
\end{figure}

\begin{figure}[!t]
\includegraphics[width=0.4\textwidth]{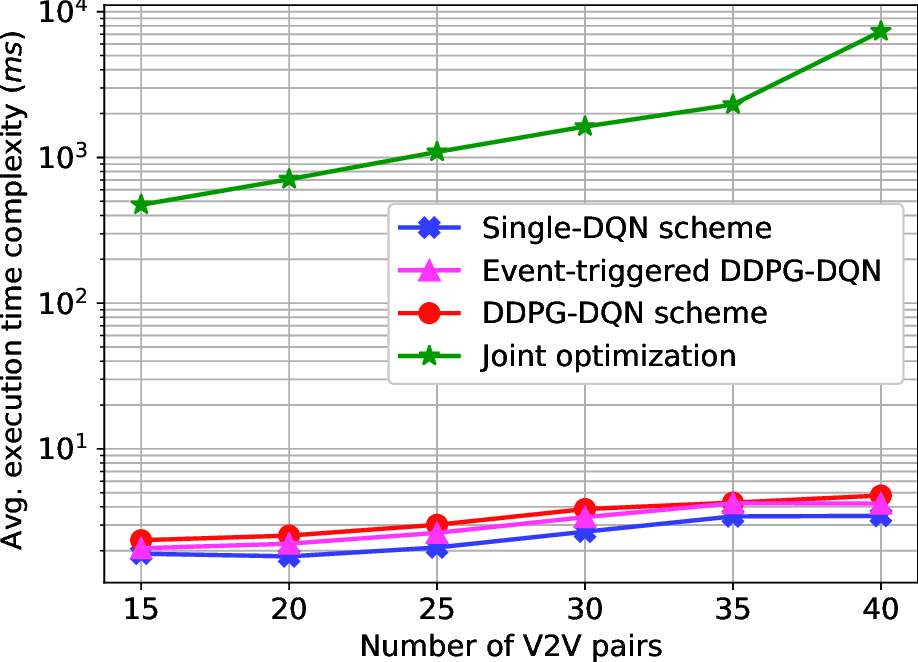}
\caption{Average execution time complexity of the algorithms for different number of VUEs $K$.}
\label{run-time}
\end{figure}
\vspace*{-5mm}

\subsection{Impact of Vehicular Mobility and Reliability Performance:}
 So far, we have presented results under fixed vehicle speed, which implies a static Doppler frequency and, therefore, a constant correlation between channels in adjacent time frames. However, in practical scenarios,  VUEs may change their speed as they move around, and it is essential to consider vehicle mobility's impact on DRL algorithms' performance.  The convergence rate of the proposed DRL-based algorithm depends on the Doppler frequency. As we decrease Doppler frequency from $f_{D}= 112$ Hz ($v_s = 60 \, \text{km/hr}$) to $f_{D}= 28$ Hz ($v_s = 15\, \text{km/hr}$), the average decoding error-probability performance remains unchanged, but the convergence time drops from 13,000 time frames to  6,000 time frames. Intuitively, this decline in convergence time with lower Doppler frequencies is related to the variability of the states observed by the agent, which is proportional to the Doppler frequency.

 To evaluate the reliability performance of the trained model in the deployment (testing phase), we use the metric of \textit{average network-wide reliability} ($\mathcal{ANR}$) defined as the  average ratio of the number of vehicles with  decoding error probability   no greater than $\epsilon_{\text{max},k}$, to the total number of vehicles, formulated as: 
\begin{equation} \label{eq:ANR}
\mathcal{ANR} = \frac{1}{\mathcal{T}_{test}}\sum_{t=1}^{\mathcal{T}_{test}} \sum_{k=1}^K \frac{\mathbb{I}\left(\epsilon_{\text{max},k} > \epsilon_k^{(t)}\right)}{K}
\end{equation}
where $\mathbb{I}$ represents the indicator function, and the averaging is performed across the testing steps $\mathcal{T}_{test}$. In our evaluation, we consider a network with $K = 5$, $P_{\text{max}} = 23 \, \text{dBm}$, $M_D= 300$ symbols. We define two sets of error probability constraints  $\varepsilon_{{max}}^{*} = [10^{-2}, 5 \times 10^{-2}, 10^{-3}, 5 \times 10^{-3}, 5 \times 10^{-2}]$, and $\varepsilon_{{max}}^{**} = [10^{-6}, 5 \times 10^{-5}, 10^{-4}, 5 \times 10^{-5}, 5 \times 10^{-4}]$, representing the  QoS requirements for different VUEs. For $\varepsilon_{{max}}^{*}$, the proposed scheme achieves an $\mathcal{ANR}$ of 96\%  compared to  87\% and 35\% $\mathcal{ANR}$ of single-DQN  and random allocation schemes, respectively. When considering more strict reliability requirements ($\varepsilon_{{max}}^{**}$), the $\mathcal{ANR}$ decreases for all schemes due to smaller values of the target errors impacting the network-wide reliability negatively. The proposed scheme still achieves an $\mathcal{ANR}$ of 88\%, significantly higher than the  69\% and 23\%  $\mathcal{ANR}$ of the single-DQN and random allocation schemes, respectively.  

\begin{table}[!t]
    \centering
    \caption{\textsc{Average time complexity}\label{table_time}}
    \renewcommand{\arraystretch}{1.2} 
    \scalebox{0.7}{
    \begin{tabular}{|c|c|c|}
    \hline
    \textbf{Scheme} & \makecell{\textbf{Average Training}\\ \textbf{Time} (\si{\milli\second})}  & \makecell{\textbf{Calculation with } \\ \textbf{trained network }(\si{\milli\second})} \\
    \hline
    Event-triggered DDPG-DQN & 11.2 & 0.8 \\
    \cline{1-1}\cline{2-2}\cline{3-3} 
    DDPG-DQN & 21.7 & 0.8 \\
    \cline{1-1}\cline{2-2}\cline{3-3} 
    Single-DQN & 41.3 & 0.6\\
    \hline
    \end{tabular}}
    \label{tab:training_times}
\end{table}

\subsection{\textit{Discussion on Implementation and Future Extensions}:}

  \textit{Training procedure:} In the proposed event-triggered DDPG-DQN algorithm, offline training is employed to mitigate the computational overhead of learning the mapping between input states and output actions.  Subsequently, during online network deployment, this mapping is executed using the pre-trained DNNs, bypassing the need for intensive computations \cite{suggested}.  Offline training can be performed using a digital twin of the vehicular network, incorporating network topology, channel models, and QoS requirements. The event-triggered DDPG-DQN algorithm can utilize this twin for offline initialization and network training at the central server. This approach allows the agent to explore in a simulated environment, mitigating real-world risks \cite{DT}.
 
 \textit{Analysis of constraint violations:}
 Our analysis of constraint violations in the testing stage reveals that the DRL agent does not always satisfy the power, aggregate delay, and reliability constraints. However, the well-trained agent satisfies the constraints in $\sim 95 \%$ and  $\sim 90 \%$ of the total testing executions for  \(K=5\) and \(K=20\) VUEs, respectively, with  $P_{max}=$  23 {dBm}, $M_{D}=$ 300 symbols. Ensuring zero-constraint violations in real-time operations remains a significant challenge in reinforcement learning.  On the one hand, providing theoretical guarantees for feasibility during online execution is crucial for DRL-based URLLC  applications. On the other hand, finding the solution
of constrained DRL is more challenging than standard DRL-based solutions. Constraint-guided DRL requires the
agents to maximize the long-term reward while satisfying the constraints as the long-term cost. In this regard, primal-dual methods \cite{primal}  and Teacher-Student learning framework \cite{teacher} offer a promising approach for constrained DRL-based solutions. The primal-dual method can be applied to optimize the constraints in the dual domain to achieve the target balance among the objective (long-term reward) and the long-term cost \cite{suggested}.  In the Teacher-Student framework, the student (DRL agent) seeks guidance from the teacher (domain-specific optimization-based algorithm) to select an action before executing it in the wireless environment, thereby accelerating the training process and safeguarding the agent against actions that would violate specific QoS requirements. 

 \textit{Dynamic network topology:} In practical scenarios, the number of vehicles under the coverage of RSU changes dynamically. To accommodate this variability, state encoding adjusts for changes in the number of VUEs under RSU coverage \cite{state_encoding}.  If the number of VUEs exceeds the trained agent's state space dimensions, information from the extra VUEs is not included in the state.  Instead, the RSU establishes the corresponding DNNs for the incoming vehicle and transfers the weights of the already-trained model to the newly established DNNs. Moreover, if one or more vehicles leave the coverage area, the agent's state is padded with zeros. It is worth pointing out that nearby vehicles often experience similar channel quality and environment observations. Therefore, the system's performance does not degrade significantly when new vehicles arrive or leave, as they can be allocated resources designated for the neighboring vehicles. Further, to avoid modifying the structure of the DNNs and retraining all the parameters with a changing number of vehicular links, graph neural networks (GNNs) are a promising avenue to explore for obtaining scalable solutions \cite{GNN}.

 \textit{Explainable AI:} For centralized learning-based algorithms, as is the case with our proposed framework, explainable AI (XAI) can be leveraged to further reduce the model's complexity and improve its robustness by explaining the learned policies during the training process \cite{khan2023}. XAI can alleviate the DRL model complexity and speed up the convergence time by offering methods for eliminating less important input features, aggregating identical states based on state abstraction in Markov decision processes, and utilizing model compression techniques. XAI can further improve the interpretability and robustness of DRL model decisions by providing methods for testing the system’s robustness, identifying potential outliers, and verifying the credibility of the decision outcomes. Additionally, to enhance the adaptability of the proposed DRL-based solution to new environment scenarios and tasks requiring computation-intensive training, emerging schemes such as model-agnostic meta-learning can be incorporated into the proposed framework \cite{MetaV2X}.

 \section{Conclusion } \label{sec:conclusion}
In this paper, we have considered the problem of joint resource allocation with short packet transmission in a URLLC-enabled V2X network. To ensure fair reliability among the vehicles, we have developed an optimization framework to minimize the maximum decoding error probability among the vehicles subject to stringent latency and reliability requirements in terms of the limited block length and tolerable decoding error probability. We have proposed an optimization theory based solution strategy using the joint convexity analysis and evaluation of the KKT optimality conditions to obtain the joint solution.  We have designed a computationally efficient  DQN and DDPG-based framework that optimizes both the discrete block length and continuous power allocation variables simultaneously under practical URLLC constraints. Additionally, we have incorporated event-triggered learning into our proposed DRL-based methodology, resulting in an interesting trade-off between the reliability performance and computation complexity in terms of the percentage of DRL process executions.  Simulation results have demonstrated that by incorporating the event-trigger mechanism, the proposed algorithm can achieve   $\sim 98 \%$  and $\sim 95 \%$  of the reliability performance offered by the  DDPG-DQN algorithm and the joint optimization scheme,
respectively, while remarkably reducing the DRL executions by up to 24$\%$.

\bibliographystyle{ieeetr}
\bibliography{bib_URLLC}

\begin{thebibliography}{10}

\bibitem{5G-V2X}
H.~Bagheri, M.~Noor-A-Rahim, Z.~Liu, H.~Lee, D.~Pesch, K.~Moessner, and
  P.~Xiao, ``5g nr-v2x: Toward connected and cooperative autonomous driving,''
  {\em IEEE Commun. Stand. Mag.}, vol.~5, no.~1, pp.~48--54, 2021.

\bibitem{r0}
H.~Yang, K.~Zheng, K.~Zhang, J.~Mei, and Y.~Qian, ``Ultra-reliable and
  low-latency communications for connected vehicles: Challenges and
  solutions,'' {\em IEEE Network}, vol.~34, no.~3, pp.~92--100, 2020.

\bibitem{r2}
P.~Popovski {\em et~al.}, ``Wireless access in ultra-reliable low-latency
  communication,'' {\em IEEE Trans. Commun.}, vol.~67, no.~8, pp.~5783--5801,
  2019.

\bibitem{nilo_new}
N.~Mehrnia and S.~Coleri, ``Multivariate extreme value theory based channel
  modeling for ultra-reliable communications,'' {\em IEEE Trans. Wireless
  Commun.}, pp.~1--1, 2023.

\bibitem{nilofer1}
N.~Mehrnia and S.~Coleri, ``Multivariate extreme value theory based rate
  selection for ultra-reliable communications,'' {\em IEEE Transactions on
  Vehicular Technology}, 2024.

\bibitem{r9}
Z.~Dong, X.~Zhu, Y.~Jiang, H.~Zeng, Z.~Wei, F.-C. Zheng, and K.-C. Leung,
  ``Dynamic manager selection assisted resource allocation in urllc with finite
  block length for 5g-v2x platoons,'' {\em IEEE Trans. Veh. Technol.}, vol.~71,
  no.~11, pp.~11336--11350, 2022.

\bibitem{2timescale}
G.~Ding, J.~Yuan, G.~Yu, and Y.~Jiang, ``Two-timescale resource management for
  ultrareliable and low-latency vehicular communications,'' {\em IEEE Trans.
  Commun.}, vol.~70, no.~5, pp.~3282--3294, 2022.

\bibitem{VURLLC2}
W.~Sun, E.~G. Ström, F.~Brännström, K.~C. Sou, and Y.~Sui, ``Radio resource
  management for d2d-based v2v communication,'' {\em IEEE Trans. Veh.
  Technol.}, vol.~65, no.~8, pp.~6636--6650, 2016.

\bibitem{VURLLC4}
W.~Wu, R.~Liu, Q.~Yang, H.~Shan, and T.~Q.~S. Quek, ``Learning-based robust
  resource allocation for ultra-reliable v2x communications,'' {\em IEEE Trans.
  Wireless Commun.}, vol.~20, no.~8, pp.~5199--5211, 2021.

\bibitem{URLLC4}
M.~K. Abdel-Aziz, S.~Samarakoon, C.-F. Liu, M.~Bennis, and W.~Saad, ``Optimized
  age of information tail for ultra-reliable low-latency communications in
  vehicular networks,'' {\em IEEE Trans. Commun.}, vol.~68, no.~3,
  pp.~1911--1924, 2020.

\bibitem{r8}
H.~Yang, K.~Zhang, K.~Zheng, and Y.~Qian, ``Joint frame design and resource
  allocation for ultra-reliable and low-latency vehicular networks,'' {\em IEEE
  Trans. Wireless Commun.}, vol.~19, no.~5, pp.~3607--3622, 2020.

\bibitem{worstlatency}
X.~Fu, C.~Guo, Y.~Qu, and X.-H. Lin, ``Resource allocation and blocklength
  selection for low-latency vehicular communications,'' {\em IEEE Wireless
  Commun. Lett.}, vol.~10, no.~5, pp.~914--918, 2021.

\bibitem{r12}
Y.~Hu, Y.~Zhu, M.~C. Gursoy, and A.~Schmeink, ``Swipt-enabled relaying in iot
  networks operating with finite blocklength codes,'' {\em IEEE J. Sel. Areas
  Commun.}, vol.~37, no.~1, pp.~74--88, 2019.

\bibitem{r13}
C.~Pan, H.~Ren, Y.~Deng, M.~Elkashlan, and A.~Nallanathan, ``Joint blocklength
  and location optimization for urllc-enabled uav relay systems,'' {\em IEEE
  Commun. Lett.}, vol.~23, no.~3, pp.~498--501, 2019.

\bibitem{r14}
H.~Ren, C.~Pan, Y.~Deng, M.~Elkashlan, and A.~Nallanathan, ``Joint power and
  blocklength optimization for urllc in a factory automation scenario,'' {\em
  IEEE Trans. Wireless Commun.}, vol.~19, no.~3, pp.~1786--1801, 2020.

\bibitem{rr15}
A.~A. Nasir, ``Min-max decoding-error probability-based resource allocation for
  a urllc system,'' {\em IEEE Commun. Lett.}, vol.~24, no.~12, pp.~2864--2867,
  2020.

\bibitem{J_Convexity}
Y.~Zhu, Y.~Hu, X.~Yuan, M.~Cenk~Gursoy, H.~Vincent~Poor, and A.~Schmeink,
  ``Joint convexity of error probability in blocklength and transmit power in
  the finite blocklength regime,'' {\em IEEE Trans. Wireless Commun.},
  pp.~1--1, 2022.

\bibitem{DRLforwireless}
N.~C. Luong, D.~T. Hoang, S.~Gong, D.~Niyato, P.~Wang, Y.-C. Liang, and D.~I.
  Kim, ``Applications of deep reinforcement learning in communications and
  networking: A survey,'' {\em IEEE Commun. Surveys Tuts.}, vol.~21, no.~4,
  pp.~3133--3174, 2019.

\bibitem{DRL23_3}
X.~Li, L.~Lu, W.~Ni, A.~Jamalipour, D.~Zhang, and H.~Du, ``Federated
  multi-agent deep reinforcement learning for resource allocation of
  vehicle-to-vehicle communications,'' {\em IEEE Trans. Veh. Technol.},
  vol.~71, no.~8, pp.~8810--8824, 2022.

\bibitem{DRL23_5}
X.~Zhang, M.~Peng, S.~Yan, and Y.~Sun, ``Deep-reinforcement-learning-based mode
  selection and resource allocation for cellular v2x communications,'' {\em
  IEEE Internet of Things J.}, vol.~7, no.~7, pp.~6380--6391, 2020.

\bibitem{DRL23_6}
H.~Ye, G.~Y. Li, and B.-H.~F. Juang, ``Deep reinforcement learning based
  resource allocation for v2v communications,'' {\em IEEE Trans. Veh.
  Technol.}, vol.~68, no.~4, pp.~3163--3173, 2019.

\bibitem{DDPG0}
C.~Guo, C.~Wang, L.~Cui, Q.~Zhou, and J.~Li, ``Radio resource management for
  c-v2x: From a hybrid centralized-distributed scheme to a distributed
  scheme,'' {\em IEEE J. Sel. Areas Commun.}, vol.~41, no.~4, pp.~1023--1034,
  2023.

\bibitem{DRL23_1}
Y.~Xu, K.~Zhu, H.~Xu, and J.~Ji, ``Deep reinforcement learning for
  multi-objective resource allocation in multi-platoon cooperative vehicular
  networks,'' {\em IEEE Trans. Wireless Commun.}, pp.~1--1, 2023.

\bibitem{5G-NR}
M.~H.~C. Garcia, A.~Molina-Galan, M.~Boban, J.~Gozalvez, B.~Coll-Perales,
  T.~Şahin, and A.~Kousaridas, ``A tutorial on 5g nr v2x communications,''
  {\em IEEE Commun. Surveys Tuts.}, vol.~23, no.~3, pp.~1972--2026, 2021.

\bibitem{inband}
C.-H. Fang, K.-T. Feng, and L.-L. Yang, ``Resource allocation for urllc service
  in in-band full-duplex-based v2i networks,'' {\em IEEE Trans. Commun.},
  vol.~70, no.~5, pp.~3266--3281, 2022.

\bibitem{ET}
F.~Solowjow and S.~Trimpe, ``Event-triggered learning,'' {\em Automatica},
  vol.~117, p.~109009, 2020.

\bibitem{ETO}
J.~Lu, L.~Han, Q.~Wei, X.~Wang, X.~Dai, and F.-Y. Wang, ``Event-triggered deep
  reinforcement learning using parallel control: A case study in autonomous
  driving,'' {\em IEEE Trans. Intell. Vehicles}, vol.~8, no.~4, pp.~2821--2831,
  2023.

\bibitem{myconf}
N.~Khan and S.~Coleri, ``Resource allocation for ultra-reliable low-latency
  vehicular networks in finite blocklength regime,'' in {\em Proc. IEEE Int.
  Mediterranean Conf. Commun. Netw. (MeditCom)}, pp.~322--327, 2022.

\bibitem{rural_battery}
R.~Atallah, M.~Khabbaz, and C.~Assi, ``Energy harvesting in vehicular networks:
  a contemporary survey,'' {\em IEEE Wireless Commun.}, vol.~23, no.~2,
  pp.~70--77, 2016.

\bibitem{SC1}
A.~Alalewi, I.~Dayoub, and S.~Cherkaoui, ``On 5g-v2x use cases and enabling
  technologies: A comprehensive survey,'' {\em IEEE Access}, vol.~9,
  pp.~107710--107737, 2021.

\bibitem{ISAC}
Y.~Li, F.~Liu, Z.~Du, W.~Yuan, and C.~Masouros, ``Isac-enabled v2i networks
  based on 5g nr: How much can the overhead be reduced?,'' in {\em 2023 IEEE
  Int. Conf. Commun. Workshops (ICC Workshops)}, pp.~691--696, IEEE, 2023.

\bibitem{channel_coherence}
J.~Li, Y.~Niu, H.~Wu, B.~Ai, T.~Q.~S. Quek, N.~Wang, and S.~Chen, ``Effective
  joint scheduling and power allocation for urllc-oriented v2i
  communications,'' {\em IEEE Trans. Veh. Technol.}, pp.~1--12, 2024.

\bibitem{rr21}
Y.~S. Nasir and D.~Guo, ``Multi-agent deep reinforcement learning for dynamic
  power allocation in wireless networks,'' {\em IEEE J. Sel. Areas Commun.},
  vol.~37, no.~10, pp.~2239--2250, 2019.

\bibitem{r3}
Y.~Polyanskiy, H.~V. Poor, and S.~Verdu, ``Channel coding rate in the finite
  blocklength regime,'' {\em IEEE Trans. Inf. Theory}, vol.~56, no.~5,
  pp.~2307--2359, 2010.

\bibitem{short_LDPC}
M.~Shirvanimoghaddam, M.~S. Mohammadi, R.~Abbas, A.~Minja, C.~Yue, B.~Matuz,
  G.~Han, Z.~Lin, W.~Liu, Y.~Li, S.~Johnson, and B.~Vucetic, ``Short
  block-length codes for ultra-reliable low latency communications,'' {\em IEEE
  Commun. Mag.}, vol.~57, no.~2, pp.~130--137, 2019.

\bibitem{r22}
H.~Ren, C.~Pan, Y.~Deng, M.~Elkashlan, and A.~Nallanathan, ``Resource
  allocation for secure urllc in mission-critical iot scenarios,'' {\em IEEE
  Trans. Commun.}, vol.~68, no.~9, pp.~5793--5807, 2020.

\bibitem{convergence}
P.~Tseng, ``Convergence of a block coordinate descent method for
  nondifferentiable minimization,'' {\em J. Optim. Theory Appl.}, vol.~109,
  pp.~475--494, 2001.

\bibitem{RL_book}
R.~S. Sutton and A.~G. Barto, {\em Reinforcement learning: An introduction}.
\newblock MIT press, 2018.

\bibitem{24}
T.~P. Lillicrap, J.~J. Hunt, A.~Pritzel, N.~Heess, T.~Erez, Y.~Tassa,
  D.~Silver, and D.~Wierstra, ``Continuous control with deep reinforcement
  learning,'' in {\em Proc. 4th Int. Conf. Learn. Represent. (ICLR)}, 2016.

\bibitem{mnih2015humanlevel}
V.~Mnih, K.~Kavukcuoglu, D.~Silver, A.~A. Rusu, J.~Veness, M.~G. Bellemare,
  A.~Graves, M.~Riedmiller, A.~K. Fidjeland, G.~Ostrovski, {\em et~al.},
  ``Human-level control through deep reinforcement learning,'' {\em nature},
  vol.~518, no.~7540, pp.~529--533, 2015.

\bibitem{MetaV2X}
Y.~Yuan, G.~Zheng, K.-K. Wong, and K.~B. Letaief, ``Meta-reinforcement learning
  based resource allocation for dynamic v2x communications,'' {\em IEEE Trans.
  Veh. Technol.}, vol.~70, no.~9, pp.~8964--8977, 2021.

\bibitem{joint_learning}
J.~Tan, Y.-C. Liang, L.~Zhang, and G.~Feng, ``Deep reinforcement learning for
  joint channel selection and power control in d2d networks,'' {\em IEEE Trans.
  Wireless Commun.}, vol.~20, no.~2, pp.~1363--1378, 2021.

\bibitem{r16}
3rd Generation Partnership~Project, ``Technical specification group radio
  access network; study on {LTE}-based {V2X} services; (release 14), document
  3{GPP} {TR} 36.885 {V}14.0.0,'' technical specification (ts), 3rd Generation
  Partnership Project (3GPP), 2016.

\bibitem{r19}
P.~Kyösti {\em et~al.}, ``Ist-4-027756 winner {II} {D}1.1.2 {V}1.2 winner {II}
  channel models,'' Feb 2007.

\bibitem{suggested}
C.~She, C.~Sun, Z.~Gu, Y.~Li, C.~Yang, H.~V. Poor, and B.~Vucetic, ``A tutorial
  on ultrareliable and low-latency communications in 6g: Integrating domain
  knowledge into deep learning,'' {\em Proc. IEEE}, vol.~109, no.~3,
  pp.~204--246, 2021.

\bibitem{DT}
R.~Dong, C.~She, W.~Hardjawana, Y.~Li, and B.~Vucetic, ``Deep learning for
  hybrid 5g services in mobile edge computing systems: Learn from a digital
  twin,'' {\em IEEE Trans. Wireless Commun.}, vol.~18, no.~10, pp.~4692--4707,
  2019.

\bibitem{primal}
Q.~Liang, F.~Que, and E.~Modiano, ``Accelerated primal-dual policy optimization
  for safe reinforcement learning,'' in {\em Proc. Neural Inf. Process. Syst.
  (NIPS)}, 2017.

\bibitem{teacher}
Y.~Zheng, L.~Lin, T.~Zhang, H.~Chen, Q.~Duan, Y.~Xu, and X.~Wang, ``Enabling
  robust drl-driven networking systems via teacher-student learning,'' {\em
  IEEE J. Sel. Areas Commun.}, vol.~40, no.~1, pp.~376--392, 2022.

\bibitem{state_encoding}
E.~Vinitsky, A.~Kreidieh, L.~Le~Flem, N.~Kheterpal, K.~Jang, C.~Wu, F.~Wu,
  R.~Liaw, E.~Liang, and A.~M. Bayen, ``Benchmarks for reinforcement learning
  in mixed-autonomy traffic,'' in {\em Proc. Conf. Robot Learn.}, pp.~399--409,
  PMLR, 2018.

\bibitem{GNN}
Y.~Gu, C.~She, S.~Bi, Z.~Quan, and B.~Vucetic, ``Graph neural network for
  distributed beamforming and power control in massive urllc networks,'' {\em
  IEEE Wireless Commun.}, pp.~1--1, 2024.

\bibitem{khan2023}
N.~Khan, S.~Coleri, A.~Abdallah, A.~Celik, and A.~M. Eltawil, ``Explainable and
  robust artificial intelligence for trustworthy resource management in 6g
  networks,'' {\em IEEE Commun. Mag.}, pp.~1--7, 2023.

\end{thebibliography}

\end{document}